\documentclass{article}

\PassOptionsToPackage{numbers, compress}{natbib}
\usepackage[preprint]{neurips_2026}

\usepackage[utf8]{inputenc} %
\usepackage[T1]{fontenc}    %
\usepackage[hidelinks]{hyperref}       %
\usepackage{url}            %
\usepackage{booktabs}       %
\usepackage{amsfonts}       %
\usepackage{graphicx}       %
\usepackage{float}          %
\usepackage{amsmath}        %
\usepackage{amsthm}         %
\usepackage{amssymb}        %
\usepackage{thm-restate}

\newtheoremstyle{boldnote}%
  {}{}{\itshape}{}{\bfseries}{.}{ }%
  {\thmname{#1}\thmnumber{ #2}\thmnote{ \textbf{(#3)}}}
\theoremstyle{boldnote}

\newtheorem{theorem}{Theorem}

\newtheorem{definition}{Definition}

\newtheorem{atheorem}{Theorem}[section]
\newtheorem{alemma}[atheorem]{Lemma}
\newtheorem{aproposition}[atheorem]{Proposition}
\newtheorem{acorollary}[atheorem]{Corollary}

\usepackage{nicefrac}       %
\usepackage{microtype}      %
\usepackage{enumitem}       %
\usepackage{xcolor}         %
\usepackage[normalem]{ulem} %
\usepackage{titletoc}       %
\definecolor{otblue}{RGB}{31,119,180}
\definecolor{indorange}{RGB}{190,130,60}
\definecolor{lowrankblue}{HTML}{5C7FDD}
\definecolor{identorange}{HTML}{E99B3C}
\definecolor{gapgreen}{HTML}{1CA550}
\definecolor{condblue}{HTML}{486FD8}
\definecolor{blindorange}{HTML}{DC8746}

\title{What Time Is It? How Data Geometry Makes Time Conditioning Optional for Flow Matching}

\author{%
  Alec Helbling\thanks{Corresponding author: \texttt{alechelbling@gatech.edu}} \\
  Georgia Tech \\
  \And
  Sebastian Gutierrez Hernandez \\
  Georgia Tech \\
  \And
  Benjamin Hoover \\
  Georgia Tech \& IBM Research \\
  \And
  Duen Horng Chau \\
  Georgia Tech \\
  \And
  Parikshit Ram \\
  IBM Research \\
}

\begin{document}

\maketitle

\begin{abstract}
Recent work has shown that models flow matching models can be trained without explicit time conditioning, challenging the standard view that the interpolation time is needed to disambiguate velocity targets. But why should a time-blind model work at all? Decomposing the time-blind flow matching loss, we identify two sources of irreducible error: a \textit{coupling variance}, which arises from ambiguous velocity targets induced by how noise and data points are paired, and the \textit{time-blindness gap}, which is the additional error caused by ignoring time. This gap shows that time-blind training is strictly harder than conventional training, reinforcing the puzzle that time-blind models work so well in practice. We resolve this tension by showing that the geometry of high-dimensional data makes time identifiable directly from noisy observations. When data concentrates near a $k$-dimensional subspace, time can be recovered from the statistical structure of noisy interpolants in directions orthogonal to the data; under a spiked-covariance model, this yields a closed-form estimator that recovers $t$ from a single observation $z$ at rate $O(1/\sqrt{d-k})$ for ambient dimension $d$. As a consequence, we prove that the time-blindness gap is asymptotically negligible relative to the coupling variance. We empirically demonstrate our identifiability result on real-world data and show that changing the coupling has a much larger effect on loss and sample quality than removing time conditioning across CIFAR-10, CelebA-HQ, and FFHQ. These results explain why time-blind flow matching works and show that the main practical lever is the choice of coupling, not explicit time conditioning.
\end{abstract}

\section{Introduction}

\begin{figure}[t]
\centering
\includegraphics[width=\linewidth]{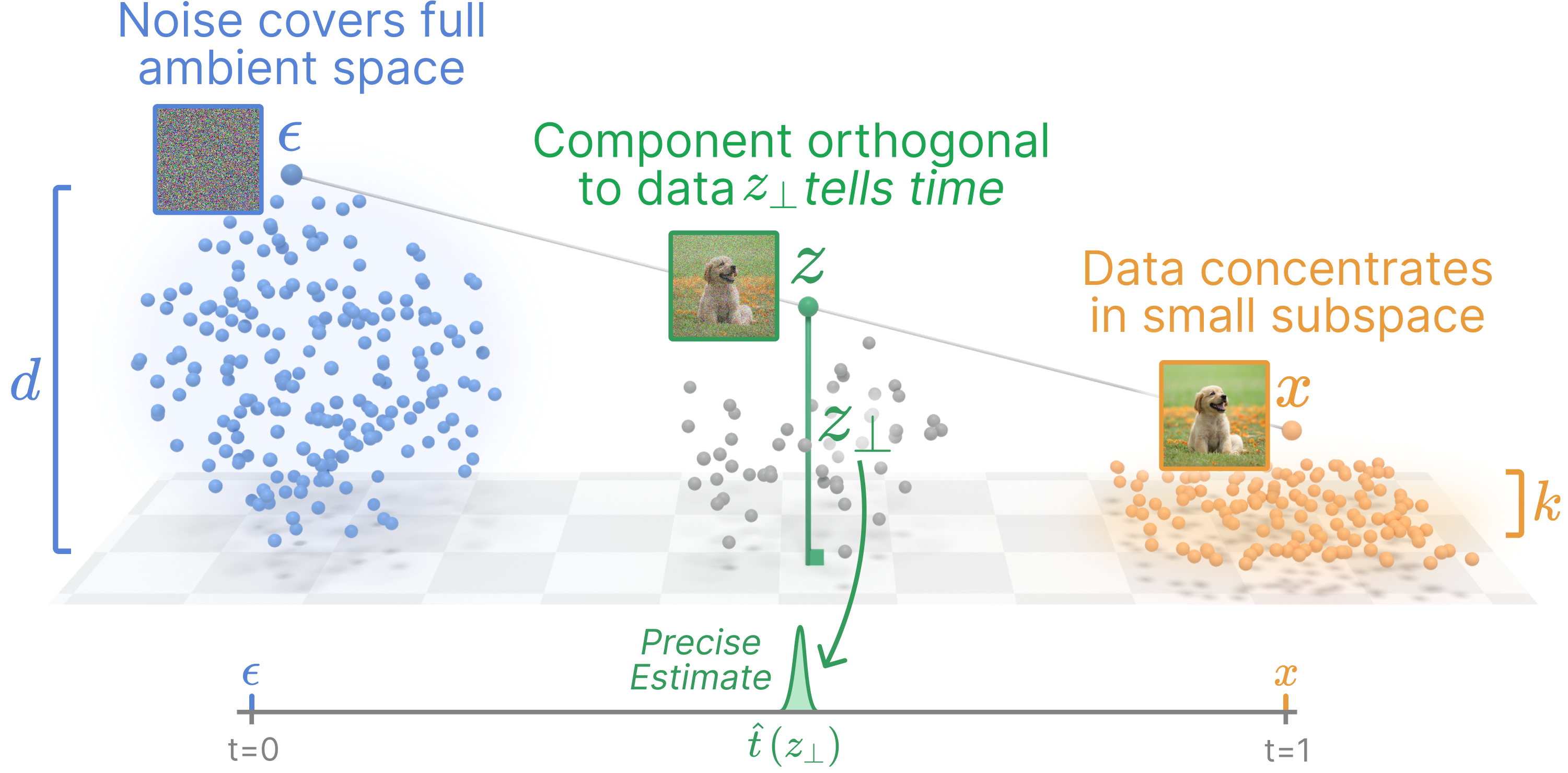}
\vspace{-1.5em}
\caption{\textbf{Time-blind flow matching works because data and noise occupy subspaces of very different sizes, leaving a large residual noise subspace whose variance encodes the interpolation time.} \textcolor{lowrankblue}{(Left) Noise dominates in a large subspace}: the source $\varepsilon \sim \mathcal{N}(0, I_d)$ is isotropic over the full ambient dimension $d$. \textcolor{identorange}{(Right) Data concentrates in a small subspace}: the target distribution is supported on a $k$-dimensional signal subspace with $k \ll d$. \textcolor{gapgreen}{(Center) Identify time from noise-subspace statistics}: the interpolant $z = (1-t)\varepsilon + tx$ inherits the dimensionality mismatch, so its component $z_\perp$ orthogonal to the data subspace is supplied almost entirely by the noise term and is approximately Gaussian with variance directly tied to $t$. Estimating that variance recovers $t$ from a single observation $z$ with no time conditioning required.}
\label{fig:time_identifiability}
\vspace{-1em}
\end{figure}

Flow matching~\citep{lipman2023flowmatchinggenerativemodeling,albergo2023buildingnormalizingflowsstochastic} has emerged as a leading framework for training generative models, learning a time-conditioned velocity field $v_\theta(z,t)$ that transports noise to data along interpolated paths. 
The time variable is usually treated as essential, under the intuition that conditioning on $t$ seems necessary to disambiguate between velocity targets when at the same spatial location.
From this perspective, a time-blind vector field $f_\theta(z)$ should be fundamentally disadvantaged, since it must assign a single velocity to locations that may be reached by many trajectories at different times (Figure~\ref{fig:geometric_intersections}).

Recent work challenges this view. The performance of time-unconditional generative models often barely degrades at all when explicit conditioning is removed and sometimes even improves ~\citep{sun2025noiseconditioning,wang2025equilibriummatchinggenerativemodeling}, and blind denoising analyses show that the noise level can sometimes be inferred from the noisy observation itself under low-dimensional data structure~\citep{kadkhodaie2026blind}. But for flow matching, this is not the whole story. Flow matching has another source of irreducible ambiguity: the coupling between noise and data during training. Even if time is known, many coupled pairs can pass through the same interpolant and induce different velocities. Thus the central question is not if a time-blind model can infer $t$, but rather if the residual cost of hiding $t$ is large compared to the coupling-induced error that all flow matching models already pay.

The time-blind flow matching loss decomposes into a \textit{learnable approximation} term, a \textit{coupling variance} that remains even for an optimal time-conditioned predictor, and a \textit{time-blindness gap} caused by averaging over unobserved times. The gap measures exactly how much harder time-blind training is than standard flow matching, and we show that in practical settings it is small. When data concentrates in a low-dimensional subspace, the interpolant's component orthogonal to that subspace is supplied almost entirely by the Gaussian source, and its magnitude changes predictably with $t$, a built-in statistical clock for the hidden time. We formalize this with an explicit estimator whose error shrinks with the residual ambient dimension, making the cost of hiding time negligible compared with the coupling variance. Coupling, not time conditioning, is the dominant lever.

In summary, our contributions are as follows:

\begin{enumerate}[leftmargin=1.5em, itemsep=0.4em]
\item \textbf{An explanatory loss decomposition for time-blind flow matching.}
We decompose the time-blind flow matching (FM) loss into a learnable error, an irreducible coupling variance shared with standard FM, and an additional time-blindness gap unique to time-blind flows (Proposition ~\ref{prop:three_term}).

\item \textbf{A constructive time-identification statistic for noisy interpolants.}
Under a spiked-covariance data model, the residual-subspace energy of a single interpolant $z$ yields a closed-form estimator of $t$ with error $O_p(1/\sqrt{d-k})$ (Theorem~\ref{thm:t_identifiable}).

\item \textbf{A relative bound showing that coupling dominates the cost of hiding time.}
Combining the decomposition with the identifiability rate, we prove that the time-blindness gap is smaller than the coupling variance by a factor $O(1/(d-k))$ (Theorem~\ref{thm:time_independence_cost}).

\item \textbf{Empirical validation on real data and trained flow models.}
On CIFAR-10, CelebA-HQ, and FFHQ, the residual-subspace statistic recovers interpolation time at the predicted rate (Section \ref{sec:exp_identifiability}), and trained models show that coupling has a much larger effect on loss and sample quality than explicit time conditioning (Section \ref{sec:exp_decomp})
\end{enumerate}

\section{Problem Setup}

\begin{figure}[t]
\centering
\includegraphics[width=\linewidth]{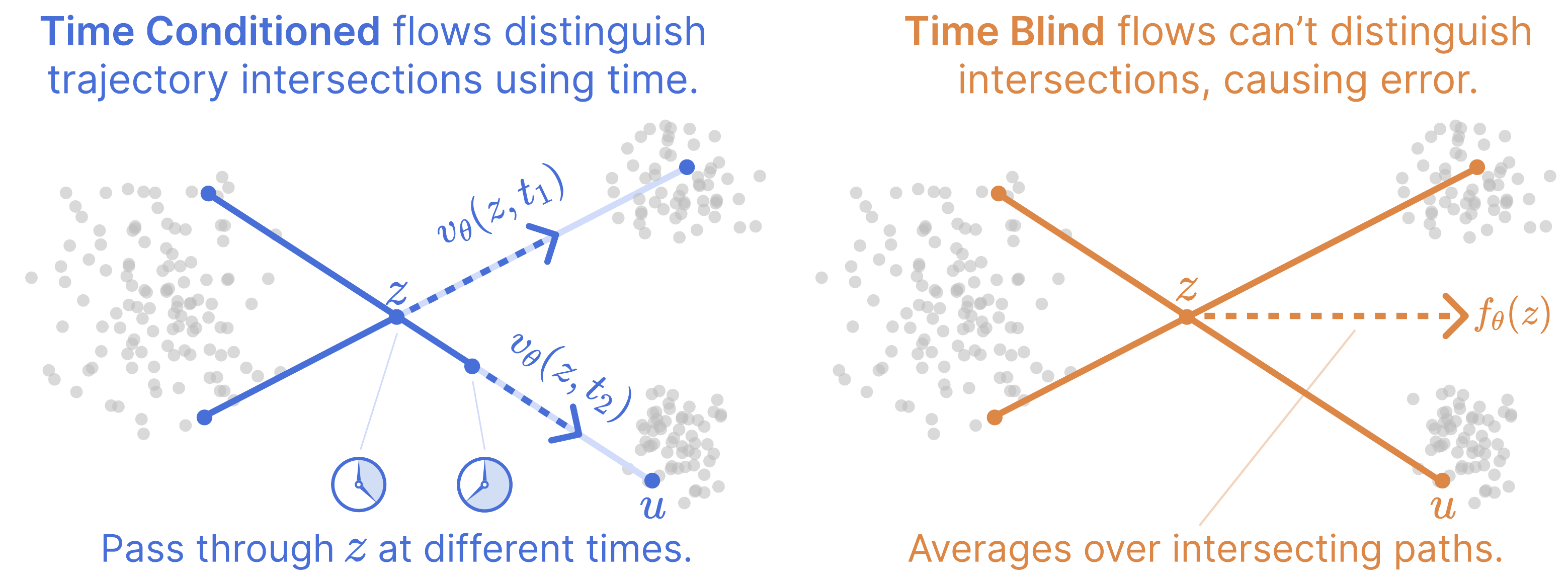}
\vspace{-1em}
\caption{\textbf{\textcolor{condblue}{Time-conditioned} flows disambiguate trajectory intersections using $t$. \textcolor{blindorange}{Time-blind} flows must commit to a single velocity at each location, averaging over conflicting trajectories.} (\emph{Left}) A \textcolor{condblue}{time-conditioned} model $v_\theta(z, t)$ observes $t$ and can assign different velocities at the same spatial location $z$ for different times; here, $v_\theta(z, 0.3)$ and $v_\theta(z, 0.7)$ point along distinct trajectories that pass through $z$. (\emph{Right}) A \textcolor{blindorange}{time-blind} model has no $t$ to disambiguate with, so it must commit to a single $f_\theta(z)$ that averages over all paths through $z$.}
\label{fig:geometric_intersections}
\end{figure}

In this section we present the theoretical formulation for understanding the core problems faced by conventional flow matching models, and more specifically by time-blind models that drop the time conditioning.

\paragraph{Flow matching.}
Flow matching (FM)~\citep{lipman2023flowmatchinggenerativemodeling,albergo2023buildingnormalizingflowsstochastic} learns a time-conditioned velocity field $v_\theta(x, t)$ transporting a noise distribution $p_0$ to a data distribution $p_1$ over $t \in [0, 1]$. Given a coupling $(\varepsilon, x) \sim \pi$ with $\varepsilon \sim p_0$ and $x \sim p_1$, the linear interpolant defines a conditional path
\begin{equation}\label{eq:interpolant}
z = (1-t)\,\varepsilon + t\,x
\end{equation}
with constant conditional velocity $u = x - \varepsilon$. The model is trained to regress $v_\theta(z, t)$ onto $u$:
\begin{equation}
\mathcal{L}_{\mathrm{FM}} = \mathbb{E}_{t,\, (\varepsilon, x) \sim \pi} \left[\left\| v_\theta(z, t) - u \right\|^2 \right].
\end{equation}

\paragraph{Time-blind flow matching.}
We consider a time-blind vector field $f_\theta(z)$ trained against the same interpolant-based regression target $u = x - \varepsilon$. Given $(\varepsilon, x) \sim \pi$ and the interpolant $z = (1-t)\varepsilon + tx$ with $t \sim \mathcal{U}[0, 1]$, the time-blind loss is
\begin{equation}\label{eq:ti_loss}
  \mathcal{L}_{\mathrm{TB}} = \mathbb{E}_{t,\, (\varepsilon, x) \sim \pi} \left[\left\| f_\theta(z) - u \right\|^2 \right].
\end{equation}
The key question is: what is the cost of dropping the time conditioning that FM relies on?

\section{An Explanatory Decomposition of the Time-Blind Loss}
\label{sec:bv_decomp}

The time-blind loss $\mathcal{L}_{\mathrm{TB}}$ mixes together errors of very different origins: some that any flow matching model must pay regardless of time conditioning, some that are unique to dropping $t$, and some that the practitioner can drive away by training a better model. Pulling these apart is what lets us pose the central question of the paper precisely: how does the extra cost of hiding $t$ compare to the irreducible cost that flow matching pays anyway? Proposition~\ref{prop:three_term} makes this separation exact.

\subsection{The Three-Term Decomposition}
\label{sec:three_term_decomp}

We consider arbitrary time-blind vector fields $f : \mathbb{R}^d \to \mathbb{R}^d$. Throughout, unsubscripted $\mathbb{E}$ denotes expectation over the full joint $(\varepsilon, x, t) \sim \pi \otimes \mathcal{U}[0,1]$.

\begin{restatable}[Anatomy of the Time-Blind Loss]{proposition}{threeterm}
\label{prop:three_term}
For any predictor $f : \mathbb{R}^d \to \mathbb{R}^d$, the time-blind loss decomposes as
\begin{equation}\label{eq:three_term}
\mathcal{L}_{\mathrm{TB}}(f) \;=\;
  \underbrace{\mathbb{E}\bigl[\|f(z) - \mathbb{E}[u \mid z]\|^2\bigr]}_{\text{I. Learnable error}}
  \;+\;
  \underbrace{\mathbb{E}\bigl[\mathrm{Var}(u \mid z, t)\bigr]}_{\text{II. Coupling variance }\mathcal{C}_\pi}
  \;+\;
  \underbrace{\mathbb{E}\bigl[\mathrm{Var}\bigl(\mathbb{E}[u \mid z, t] \,\big|\, z\bigr)\bigr]}_{\text{III. Time-Blindness Gap }\Delta}.
\end{equation}
\textit{(Proof in Appendix~\ref{app:coupling_error}.)}
\end{restatable}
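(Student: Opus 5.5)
The plan is to apply the standard bias--variance (Pythagorean) decomposition twice, once conditioning on $z$ and once on $(z,t)$, and then glue the two together with the law of total variance. Throughout we assume $\mathbb{E}\|u\|^2<\infty$ and $\mathbb{E}\|f(z)\|^2<\infty$; otherwise both sides are infinite. Under these conditions all conditional expectations below are well-defined $L^2$ projections.

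\textbf{Step 1 (conditioning on $z$).} Write $m(z)=\mathbb{E}[u\mid z]$ and expand
\[
\|f(z)-u\|^2=\|f(z)-m(z)\|^2+2\langle f(z)-m(z),\,m(z)-u\rangle+\|m(z)-u\|^2 .
\]
The cross term has zero expectation. To see this, condition on $z$: the factor $f(z)-m(z)$ is $\sigma(z)$-measurable, and $\mathbb{E}[m(z)-u\mid z]=0$. This gives
\[
\mathcal{L}_{\mathrm{TB}}(f)=\mathbb{E}\|f(z)-m(z)\|^2+\mathbb{E}[\mathrm{Var}(u\mid z)],
\]
where $\mathrm{Var}$ of a vector means the trace of the conditional covariance, i.e.\ $\mathbb{E}[\|u-\mathbb{E}[u\mid z]\|^2\mid z]$. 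The first term is exactly Term I.

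\textbf{Step 2 (law of total variance).} Apply the law of total variance conditionally on $z$, with the finer $\sigma$-algebra $\sigma(z,t)\supseteq\sigma(z)$:
\[
\mathrm{Var}(u\mid z)=\mathbb{E}\bigl[\mathrm{Var}(u\mid z,t)\,\big|\,z\bigr]+\mathrm{Var}\bigl(\mathbb{E}[u\mid z,t]\,\big|\,z\bigr).
\]
This identity comes from the same orthogonality trick. Write $u-m(z)=(u-g(z,t))+(g(z,t)-m(z))$ with $g(z,t)=\mathbb{E}[u\mid z,t]$. The cross term vanishes after conditioning on $(z,t)$, and the tower property gives $\mathbb{E}[g(z,t)\mid z]=m(z)$. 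Taking outer expectations and applying the tower property once more, $\mathbb{E}[\mathrm{Var}(u\mid z)]$ becomes $\mathcal{C}_\pi+\Delta$. Substituting this into Step 1 yields the claimed three-term identity.

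\textbf{Main obstacle.} There is no real analytic difficulty. The only point needing care is measure-theoretic: $z$ is a function of $(\varepsilon,x,t)$, and $\mathbb{E}[u\mid z]$ and $\mathbb{E}[u\mid z,t]$ are defined under the joint law $\pi\otimes\mathcal{U}[0,1]$. We need the nesting $\sigma(z)\subseteq\sigma(z,t)$ so that the tower property applies; this holds trivially. We will also state explicitly that vector variances are traces, so that every cross term is a scalar inner product whose expectation vanishes by the conditional-expectation orthogonality used above.
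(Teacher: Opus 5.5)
Your proposal is correct and matches the paper's proof: first the bias--variance split conditioning on $z$, where the cross term vanishes by $\sigma(z)$-measurability, then the law of total variance with the finer $\sigma$-algebra $\sigma(z,t)$ to split $\mathbb{E}[\mathrm{Var}(u \mid z)]$ into $\mathcal{C}_\pi + \Delta$. Your explicit square-integrability assumption and your convention that vector variances are traces are reasonable clarifications that the paper leaves implicit.
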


\noindent The decomposition is exact, holds for any $f$, and requires no assumptions on the coupling or model class. Each term has a distinct origin and a distinct lever, which we now unpack in turn.

\paragraph{I. Learnable error.}
This is the squared distance between the predictor and the optimal time-blind predictor $f^*(z) := \mathbb{E}[u \mid z]$. It is the only term that depends on the choice of predictor, and the only one the practitioner can drive to zero: with sufficient capacity and data, $f_\theta \to f^*$ and this term vanishes.

\paragraph{II. Coupling variance.}
This is the irreducible variance
\[
\mathcal{C}_\pi \;:=\; \mathbb{E}\bigl[\mathrm{Var}(u \mid z, t)\bigr]
\]
of the regression target after conditioning on \emph{both} $z$ and $t$. This floor exists even if we grant the model the very information we are trying to hide: fixing $(z, t)$ does not fix the underlying pair $(\varepsilon, x)$, since $z = (1-t)\varepsilon + tx$ is one affine constraint on two variables. Distinct pairs that route through the same $(z,t)$ generally carry different velocities, and the best any predictor can do is predict their conditional mean $v^*(z,t) := \mathbb{E}[u \mid z, t]$, leaving residual variance $\mathcal{C}_\pi$. This floor is paid by the optimal time-\emph{conditioned} predictor in standard flow matching too (Lemma~\ref{lem:fm_bias_variance}), and the time-blind model must at least inherit it. The size of $\mathcal{C}_\pi$ is controlled entirely by the coupling $\pi$: independent couplings permit many distinct $(\varepsilon, x)$ to route through the same $(z, t)$ with disagreeing velocities, while structured couplings such as mini-batch optimal transport pair nearer points and tighten the conditional. This is the lever we manipulate in Section~\ref{sec:ot_coupling}.

\paragraph{III. Time-blindness gap.}
This is the \emph{additional} price
\[
\Delta \;:=\; \mathbb{E}\bigl[\mathrm{Var}\bigl(v^*(z,t) \mid z\bigr)\bigr]
\]
paid for not observing $t$, on top of the coupling variance. A time-blind predictor cannot assign different velocities to the same location at different times (Figure~\ref{fig:geometric_intersections}), so at its optimum it must average $v^*$ over the posterior $t \mid z$: $f^*(z) = \mathbb{E}[v^*(z, t) \mid z]$. The gap $\Delta$ measures how much $v^*$ varies across this posterior --- if $z$ nearly determines $t$, the gap collapses. Since $\Delta \geq 0$ always (Corollary~\ref{cor:eqm_geq_fm}), this term is unique to time-blind models and makes the time-blind optimum at least as hard as conventional flow matching.

\section{Data Geometry Mitigates the Time-Blindness Gap}
\label{sec:identifiability}

Proposition~\ref{prop:three_term} reduces the puzzle to a comparison between two irreducible terms: the time-blindness gap $\Delta$ and the coupling variance $\mathcal{C}_\pi$. This section shows that, in high-dimensional low-rank regimes, the hidden time is already encoded in the geometry of the interpolant. In particular, the residual subspace provides a high-dimensional clock that identifies $t$ from $z$ alone. Once $t$ is nearly determined by $z$, explicitly conditioning on $t$ adds little information, so the time-blindness gap is small. Under the spiked covariance model, this intuition yields the asymptotic comparison $\Delta / \mathcal{C}_\pi = O(1/(d-k))$, leaving the coupling variance as the dominant loss floor.

\begin{figure}[t]
\centering
\includegraphics[width=\linewidth]{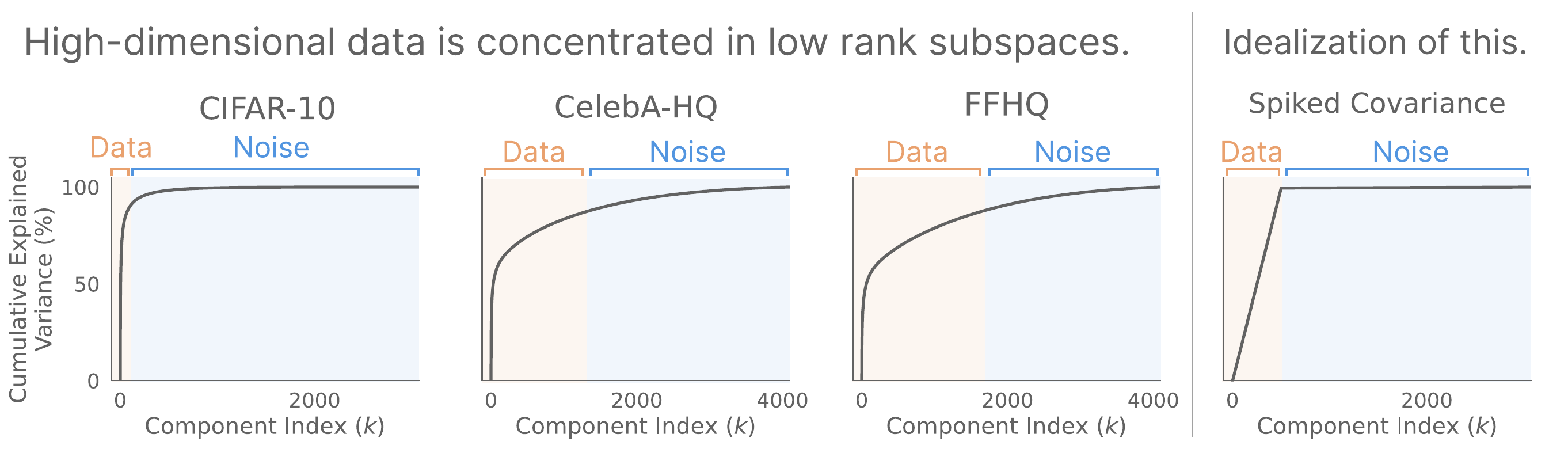}
\vspace{-1em}
\caption{\textbf{Real data concentrates in a low-rank subspace and leaves a sizable residual subspace.} Cumulative explained variance captured by the top-$k$ principal components on each real dataset and on the spiked covariance model used in Theorem~\ref{thm:t_identifiable}. Pixel-space CIFAR-10 shows dramatic low-rank concentration; on the FFHQ and CelebA-HQ VAE latents variance is more spread out across components but still concentrates most of its mass in a small fraction of them, leaving a large residual subspace ($d - k$ in the thousands at the $80\%$ threshold shown). The spiked model is one tractable instance of the broader condition that $d-k$ is large enough for the noise-subspace clock.}
\label{fig:explained_variance}
\end{figure}

\subsection{The Noise Subspace is a High-Dimensional Clock}
\label{sec:picture}

Suppose the data $x \in \mathbb{R}^d$ concentrates near a $k$-dimensional signal subspace with $k \ll d$.
Let $U_\perp \in \mathbb{R}^{d \times (d-k)}$ be an orthonormal basis for the orthogonal complement of this subspace, and let $P_\perp = U_\perp U_\perp^\top$ be the projection preserved by that basis.
The key observation is that $P_\perp z$ removes most of the data signal while retaining the ambient Gaussian source noise.
Thus the residual energy $\|P_\perp z\|^2$ changes predictably with the interpolation time $t$.

\paragraph{Spiked covariance model.}
To make this mechanism explicit, we analyze a standard spiked covariance model~\citep{johnstone2001largest, chen2023score, wang2024unreasonable}
\[
x \sim \mathcal{N}(0,\, U_x \Lambda U_x^\top + \sigma^2 I_d),
\]
where $U_x \in \mathbb{R}^{d \times k}$ spans the signal subspace, $\Lambda$ contains the excess variances in the signal directions, and $\sigma^2 I_d$ is an isotropic residual floor.
This model idealizes the empirical observation that natural data often has most of its variance in a low-dimensional subspace while retaining small residual variance in the remaining directions (Figure~\ref{fig:explained_variance}).

\paragraph{The residual variance clock.}
Under this model, projection onto the orthogonal complement of $U_x$ eliminates the spike directions.
In each residual coordinate, $P_\perp z$ combines an ambient Gaussian-noise coordinate with variance $1$, scaled by $(1-t)$, and a residual data coordinate with variance $\sigma^2$, scaled by $t$.
Since these terms are independent and centered, their variances add:
\[
\operatorname{Var}\!\left[(P_\perp z)_j\right]
=
(1-t)^2 \operatorname{Var}(\varepsilon_j)
+
t^2 \operatorname{Var}(x_j)
=
(1-t)^2 + t^2\sigma^2.
\]
Therefore,
\begin{equation}\label{eq:projected_gaussian}
P_\perp z \;\sim\; \mathcal{N}\!\bigl(0,\; \sigma_\perp^2(t)\, I_{d-k}\bigr),
\qquad
\sigma_\perp^2(t) \;:=\; (1-t)^2 + t^2 \sigma^2.
\end{equation}
We call $\sigma_\perp^2(t)$ the \emph{variance clock}. A high-dimensional isotropic Gaussian places almost all its mass on a thin shell of squared radius $(d-k)\,\sigma_\perp^2(t)$, with relative fluctuations of order $1/\sqrt{d-k}$, so observing the empirical radius of $P_\perp z$ pinpoints the variance and hence $t$.
This gives a closed-form estimator on any branch where the clock is invertible:
\begin{equation}\label{eq:t_hat_def}
\widehat{\sigma}_\perp^2 \;:=\; \frac{1}{d-k}\,\|P_\perp z\|^2,
\qquad
\hat{t}(z) \;:=\; (\sigma_\perp^2)^{-1}\!\bigl(\widehat{\sigma}_\perp^2\bigr).
\end{equation}

Because the shell concentrates at rate $1/\sqrt{d-k}$, this single-sample estimator recovers $t$ with vanishing error as the residual dimension grows, as shown in Theorem~\ref{thm:t_identifiable}.

\begin{theorem}[Identifiability of $t$ from $z$; informal]
\label{thm:t_identifiable}
Under the spiked covariance model, on any interval where the noise-subspace variance clock identifies time, the estimator $\hat{t}(z)$ in~\eqref{eq:t_hat_def} satisfies
\[
\hat{t}(z) \;=\; t \;+\; O_p\!\left(\frac{1}{\sqrt{d-k}}\right).
\]
\textit{(Formal statement and proof in Appendix~\ref{app:low_rank_concentration}.)}
\end{theorem}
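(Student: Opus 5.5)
The plan is a chi-square concentration step followed by a delta-method inversion of the variance clock. Condition on a fixed $t$ in an interval $I$ on which $\sigma_\perp^2$ is strictly monotone with derivative bounded away from zero. Write $m := d-k$. By \eqref{eq:projected_gaussian}, and using the independence of $\varepsilon$ and $x$ (the independent coupling, or more generally any coupling whose projection onto the residual subspace preserves the marginal Gaussian structure), we have $P_\perp z = \sigma_\perp(t)\, g$ with $g \sim \mathcal{N}(0, I_m)$ in the coordinates $U_\perp^\top z$. Hence
\[
\widehat{\sigma}_\perp^2 = \sigma_\perp^2(t)\,\frac{\|g\|^2}{m},
\]
where $\|g\|^2 \sim \chi^2_m$. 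This reduces the problem to a scalar one.

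\textbf{Step 1 (concentration of the empirical variance).} The mean of $\|g\|^2/m$ is $1$ and its variance is $2/m$. Chebyshev's inequality, or Laurent--Massart if we want exponential tails, then gives
\[
\widehat{\sigma}_\perp^2 - \sigma_\perp^2(t) = \sigma_\perp^2(t)\, O_p(m^{-1/2}).
\]
Because $\sigma_\perp^2(t) \le \max(1,\sigma^2)$ on $[0,1]$, this bound is uniform in $t$.

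\textbf{Step 2 (inverting the clock).} The map $\sigma_\perp^2(t) = (1-t)^2 + t^2\sigma^2$ has derivative $2\bigl(t(1+\sigma^2) - 1\bigr)$. It vanishes only at $t_\star = 1/(1+\sigma^2)$. On any compact interval $I$ excluding a neighborhood of $t_\star$, the derivative therefore satisfies $|{\sigma_\perp^2}'| \ge c > 0$. By the inverse function theorem, the inverse $h$ restricted to the image $\sigma_\perp^2(I)$ is Lipschitz with constant $1/c$. This formalizes the hypothesis that ``the clock identifies time.''

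Two technical points arise in this step.

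\begin{enumerate}
\item The statistic may fall outside the image $\sigma_\perp^2(I)$. To handle this, define $\hat t$ by clipping $\widehat{\sigma}_\perp^2$ to the image before inverting. Clipping is a $1$-Lipschitz projection, and by Step 1 it is inactive with probability tending to one whenever $t$ lies in the interior of $I$.
\item Once clipping is in place, $|\hat t - t| = |h(\widehat{\sigma}_\perp^2) - h(\sigma_\perp^2(t))| \le c^{-1}|\widehat{\sigma}_\perp^2 - \sigma_\perp^2(t)|$. This is $O_p(m^{-1/2})$ by Step 1.
\end{enumerate}

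For a sharper statement, the delta method gives the asymptotic normality
\[
\sqrt{m}\,(\hat t - t) \Rightarrow \mathcal{N}\!\left(0,\; \frac{2\sigma_\perp^4(t)}{({\sigma_\perp^2}'(t))^2}\right).
\]

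\textbf{Main obstacle.} I expect the difficulty to lie in Step 2 rather than Step 1. The clock is not globally invertible: it is non-monotone, with two branches meeting at $t_\star$, and its slope degenerates near $t_\star$. The formal statement must therefore restrict attention to a branch bounded away from $t_\star$; near $t_\star$ the rate degrades to $m^{-1/4}$.

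A second subtlety concerns the randomness of $t$. If $t \sim \mathcal{U}[0,1]$ is also random, I would state the result conditionally on $t \in I$. Since the Step 1 bound is uniform in $t$, it integrates to an unconditional $O_p$ statement on the event $\{t \in I\}$.
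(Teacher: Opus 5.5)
Your proposal is correct and is essentially the paper's own project--concentrate--invert argument: projection gives $\widehat{\sigma}_\perp^2 \overset{d}{=} \sigma_\perp^2(t)\,Y/(d-k)$ with $Y \sim \chi^2_{d-k}$, and chi-square concentration gives an $O_p(1/\sqrt{d-k})$ error. Lipschitz inversion on an interval avoiding $t^* = 1/(1+\sigma^2)$ then transfers the rate to $t$; your clipping step matches the projection $\Pi_{\sigma_\perp^2(I)}$ in the paper's appendix definition of $\hat t$, and your delta-method limit matches the Gaussian prediction the paper later uses in its experiments. One small caution: your parenthetical extension to other couplings is not right as phrased, because the clock $\sigma_\perp^2(t)$ needs $P_\perp \varepsilon$ and $P_\perp x$ to be independent (the Gaussian OT coupling $x = \Sigma^{1/2}\varepsilon$ keeps $P_\perp z$ isotropic Gaussian but yields the different clock $((1-t) + t\sigma)^2$), though your main argument, like the paper's, only uses the independent coupling.
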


\noindent Thus, removing explicit time conditioning does not necessarily remove time information. In high-dimensional low-rank regimes, the interpolant $z$ itself contains an implicit clock, so a sufficiently expressive time-blind predictor can infer the relevant time from geometry rather than from an external conditioning variable. The only caveat is that this clock is branchwise: $\sigma_\perp^2(t)$ decreases until its critical point $t^* = 1/(1+\sigma^2)$ and increases thereafter, so the variance alone identifies $t$ only on intervals where this map is one-to-one. In our datasets, $t^*$ lies near the data endpoint, so the descending branch covers nearly the full interpolation path (Table~\ref{tab:dataset_sigma2}).

\paragraph{Beyond isotropic spectra.}
The residual-clock mechanism does not rely on an exactly isotropic residual spectrum.
For anisotropic Gaussian residuals, concentration is governed by the effective rank
\[
r_{\mathrm{eff}}(\Sigma_{P,t}) \;:=\; \frac{(\operatorname{tr}\Sigma_{P,t})^2}{\operatorname{tr}(\Sigma_{P,t}^2)},
\qquad
\Sigma_{P,t} \;:=\; (1-t)^2 I + t^2 \Sigma_P,
\]
which measures the number of projected directions that contribute appreciable variance.
Thus the role played by $d-k$ in the spiked model is played more generally by $r_{\mathrm{eff}}(\Sigma_{P,t})$ (Proposition~\ref{prop:general_cov_linear}, Appendix~\ref{app:effective_rank_generalization}).
We focus on the spiked model in the main text because it yields a closed-form inverse clock
and a transparent comparison between the time-blindness gap and coupling variance
(Theorem~\ref{thm:time_independence_cost}).
Empirically, PCA-fitted residual spectra on real data have sufficiently large effective rank
for the same prediction to track the observed estimator-error distributions
(Section~\ref{sec:exp_identifiability}).

\subsection{The Time-Blindness Gap is Negligible Relative to Coupling Variance}
\label{sec:cost_time_independence}

Theorem~\ref{thm:t_identifiable} shows that $t$ can be recovered from $z$ alone at rate $O_p(1/\sqrt{d-k})$.
This directly controls the time-blindness gap in Proposition~\ref{prop:three_term}.
Indeed, if $t$ is nearly determined by $z$, then the time-aware optimal predictor $v^*(z,t)=\mathbb{E}[u\mid z,t]$ varies little over the posterior uncertainty $t\mid z$.
Equivalently, explicitly revealing $t$ adds little information beyond what is already present in $z$.
Under the spiked covariance model, this intuition yields a quantitative comparison between the extra cost of hiding time and the coupling variance already paid by time-conditioned flow matching.

\begin{theorem}[Cost of Time-Independence; informal]
\label{thm:time_independence_cost}
Under the spiked covariance model with independent coupling, on any interval where the noise-subspace variance clock identifies time, the time-blindness gap $\Delta$ and coupling variance $\mathcal{C}_\pi$ satisfy
\[
    \frac{\Delta}{\mathcal{C}_\pi}
    \;=\; \frac{\mathbb{E}\bigl[\mathrm{Var}\bigl(\mathbb{E}[u \mid z, t] \mid z\bigr)\bigr]}{\mathbb{E}\bigl[\mathrm{Var}(u \mid z, t)\bigr]}
    \;=\; O\!\left(\frac{1}{d-k}\right).
\]
\textit{(Formal statement and proof in Appendix~\ref{app:relative_time_independence_cost}.)}
\end{theorem}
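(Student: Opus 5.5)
The plan is to make everything explicit using Gaussianity. Under the spiked model with independent coupling and $\varepsilon\sim\mathcal{N}(0,I_d)$, the triple $(z,u)$ given $t$ is jointly Gaussian with mean zero. Write $\Sigma_x = U_x\Lambda U_x^\top + \sigma^2 I_d$. Then
\[
\operatorname{Cov}(z\mid t)=(1-t)^2 I+t^2\Sigma_x,\qquad
\operatorname{Cov}(u,z\mid t)=t\Sigma_x-(1-t)I .
\]
Since every matrix involved is diagonal in the eigenbasis of $\Sigma_x$, the problem decouples coordinatewise. The conditional mean is linear, $v^*(z,t)=A(t)z$, where $A(t)$ is diagonal with entries
\[
a_\lambda(t)=\frac{t\lambda-(1-t)}{(1-t)^2+t^2\lambda}.
\]
Here $\lambda\in\{\lambda_i+\sigma^2\}\cup\{\sigma^2\}$. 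The conditional variance $\operatorname{Var}(u\mid z,t)$ is deterministic, with per-coordinate value
\[
c_\lambda(t)=(1+\lambda)-\frac{(t\lambda-(1-t))^2}{(1-t)^2+t^2\lambda}=\frac{\lambda}{(1-t)^2+t^2\lambda}.
\]
Hence $\mathcal{C}_\pi=\mathbb{E}_t\sum_\lambda c_\lambda(t)$. This sum contains $d-k$ identical residual terms $\sigma^2/\sigma_\perp^2(t)$, so on any compact interval $[a,b]$ where the clock is invertible (which keeps $\sigma_\perp^2$ bounded away from $0$), $\mathcal{C}_\pi\ge c\,(d-k)$ for a constant $c>0$. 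This is the denominator.

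For the numerator I would use the variational characterization of conditional variance: for any measurable $g(z)$,
\[
\Delta=\mathbb{E}\|v^*(z,t)-\mathbb{E}[v^*\mid z]\|^2\le \mathbb{E}\|A(t)z-g(z)\|^2 .
\]
I take the plug-in predictor $g(z)=A(\hat t(z))z$, using the estimator of Theorem~\ref{thm:t_identifiable} (clipped to $[a,b]$ so that it stays in the region where the clock is invertible). The bound then becomes
\[
\Delta\le\mathbb{E}\,\bigl\|\bigl(A(t)-A(\hat t)\bigr)z\bigr\|^2 .
\]
Since the entries $a_\lambda$ are smooth with derivatives uniformly bounded on $[a,b]$ for bounded spikes, a mean-value step gives
\[
\Delta\le L^2\,\mathbb{E}\bigl[(\hat t-t)^2\|z\|^2\bigr].
\]

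The main obstacle is the naive estimate of this last expectation. Its two factors scale as $\|z\|^2\sim d$ and $(\hat t-t)^2\sim 1/(d-k)$, which would give $\Delta=O(d/(d-k))$. That is only $O(1)$, i.e. a ratio $\Delta/\mathcal{C}_\pi=O(1/(d-k))$ when $k$ is fixed or $d/(d-k)$ is bounded. I would handle this as follows.

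\textbf{Split by subspace.} Separate the residual block from the spike block. On the residual block, all $d-k$ coordinates share the same coefficient $a_{\sigma^2}(t)$. The key observation is that the optimal time-blind predictor there is essentially $\mathbb{E}[a_{\sigma^2}(t)\mid z]\,P_\perp z$, and $\|P_\perp z\|^2$ is itself the sufficient statistic for $t$. This gives a residual-block contribution of
\[
\mathbb{E}\bigl[\operatorname{Var}(a_{\sigma^2}(t)\mid z)\,\|P_\perp z\|^2\bigr]\approx (d-k)\,\sigma_\perp^2\,a'^2\,\operatorname{Var}(t\mid z)=O(1),
\]
which is exact up to constants. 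On the spike block the contribution is $O(k/(d-k))$.

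\textbf{Control the tails.} Use Gaussian chi-square concentration (Laurent--Massart) to bound $\mathbb{E}[(\hat t-t)^2]$ and $\mathbb{E}[(\hat t-t)^2\|P_\perp z\|^2]$ in expectation rather than only in probability. The clipping to $[a,b]$ keeps the tail events from contributing more than exponentially small terms; Cauchy--Schwarz handles the cross terms.

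Combining the two steps gives $\Delta=O(1+k/(d-k))$ and $\mathcal{C}_\pi=\Theta(d-k)$. With $k$ and the spikes held bounded, this yields $\Delta/\mathcal{C}_\pi=O(1/(d-k))$.
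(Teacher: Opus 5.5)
Your numerator argument follows the paper's proof. You use the plug-in predictor $g(z)=A(\hat t(z))z$, uniformly Lipschitz slopes, a Lipschitz inverse of the clipped clock, and a spike/residual split of $\|z\|^2$ giving $O(k/(d-k))+O(1)$. Note, though, that this split yields $\Delta=O(1+k/(d-k))=O(d/(d-k))$. That is exactly the order of the ``naive'' estimate you flagged as the main obstacle, so the split does not improve anything. The posterior-variance heuristic and the Laurent--Massart tail control are also unnecessary. If $\hat t$ is defined by projecting $\widehat{\sigma}_\perp^2$ onto $\sigma_\perp^2(I)$, then $|\hat t-t|\le C\,|\widehat{\sigma}_\perp^2-\sigma_\perp^2(t)|$ holds deterministically, and exact $\chi^2$ moments finish the bound.

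The genuine gap is in the denominator. You lower-bound $\mathcal{C}_\pi$ using only the $d-k$ residual coordinates and then assert $\mathcal{C}_\pi=\Theta(d-k)$, which is false when $k$ is comparable to $d$. Combined with $\Delta=O(d/(d-k))$, this gives only $\Delta/\mathcal{C}_\pi=O(d/(d-k)^2)$. That is why you need the extra hypothesis that $k$ is bounded (or $k/d$ bounded away from $1$). The statement is uniform in $k$, with constants depending only on $I$, $\sigma^2$ and $\lambda_{\max}$. In particular, the paper uses it to say that for fixed $d-k$ and growing $k$ the ratio stays bounded, whereas your bound grows like $d$ in that regime.

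The missing observation is that your own formula $c_\lambda(t)=\lambda/((1-t)^2+t^2\lambda)$ is increasing in $\lambda$. So every spike coordinate contributes at least as much as a residual one:
$$c_\lambda(t)\ge c_{\sigma^2}(t)\ge c_0:=\min_{t\in I}\sigma^2/\sigma_\perp^2(t)>0,$$
and hence $\mathcal{C}_\pi\ge c_0 d$. With this, the ``naive'' bound $\Delta\le C_1 d/(d-k)$ already suffices. The factor $d$ cancels, giving $\Delta/\mathcal{C}_\pi\le (C_1/c_0)/(d-k)$ with no restriction on $k$, which is exactly the paper's proof.

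A minor point: lower-bounding $c_{\sigma^2}(t)$ requires $\sigma_\perp^2$ to be bounded above, which is automatic on $[0,1]$. It does not follow from $\sigma_\perp^2$ being bounded away from $0$.
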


\noindent
The rate is set by the noise-subspace dimension alone: holding $k$ fixed and growing $d$ gives the $O(1/d)$ scaling, while holding $d-k$ fixed and growing $k$ leaves the ratio bounded by a constant.
This separates our account from purely ambient-dimension explanations of time identifiability~\citep{sun2025noiseconditioning}, and we validate both regimes empirically in Sections~\ref{sec:exp_ambient_dimension} and~\ref{sec:exp_decomp}.

\paragraph{Coupling as the dominant lever.}
The coupling variance $\mathcal{C}_\pi$ is not specific to time-blind models: by Lemma~\ref{lem:fm_bias_variance}, it is the irreducible floor paid even by the optimal time-conditioned predictor $v^*(z,t)=\mathbb{E}[u\mid z,t]$.
Theorem~\ref{thm:time_independence_cost} shows that the additional price of removing time conditioning is lower order in the high-dimensional low-rank regime.
Thus, once time is identifiable from $z$, reducing $\mathcal{C}_\pi$ should matter more than restoring explicit time conditioning.
\label{sec:ot_coupling}%
In our experiments, we reduce $\mathcal{C}_\pi$ using mini-batch optimal transport~\citep{tong2024improvinggeneralizingflowbasedgenerative,fatras2021minibatch}, which approximates the population OT map~\citep{brenier1991polar} by solving a linear assignment within each training batch.

\section{Experiments}
\label{sec:experiments}

Our experiments confirm three core predictions of our theory. Section~\ref{sec:exp_decomp} confirms that the three-term decomposition is additive across paired (time conditioning, coupling) settings, with the coupling variance dominating the loss and mini-batch OT translating that dominance into sample-quality gains. Section~\ref{sec:exp_identifiability} confirms that the noise-subspace estimator $\hat{t}(z)$ matches its predicted error distribution on synthetic and real data. Section~\ref{sec:exp_identifiability} shows that on the spiked Gaussian, the time-blindness gap is governed by the noise-subspace dimension $d-k$ rather than $d$ alone, and is dwarfed by the coupling variance across the $(d,k)$ plane.

We evaluate on two complementary settings: (1)~\textbf{Real image datasets.} CIFAR-10~\citep{Krizhevsky09learningmultiple} in pixel space ($d{=}3{,}072$) with an ADM UNet, and CelebA-HQ~\citep{karras2018progressivegrowinggansimproved} and FFHQ~\citep{karras2019stylebasedgeneratorarchitecturegenerative} in Stable Diffusion VAE~\citep{rombach2022highresolutionimagesynthesislatent} latent space ($d{=}4{,}096$) with a SiT-B/2 backbone~\citep{ma2024sitexploringflowdiffusionbased}, spanning two modalities and two architectures so that observed effects are not artifacts of either. (2)~\textbf{Synthetic spiked covariance.} The Gaussian model of Section~\ref{sec:picture} with $\Lambda$ eigenvalues spaced linearly in $[1, 10]$ and $\sigma^2 = 0.1$, where varying $d$ and $k$ directly sweeps the noise-subspace dimension $d-k$ that governs the rate in Theorem~\ref{thm:t_identifiable}. (Details in Appendix~\ref{app:datasets}.)

\begin{figure}[t]
\centering
\includegraphics[width=\linewidth]{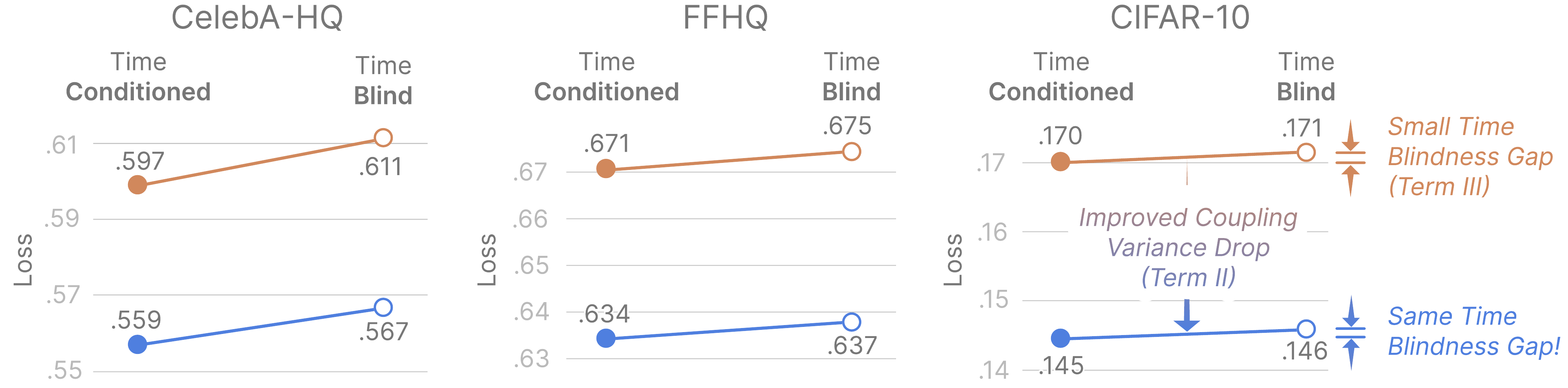}
\vspace{-1em}
\caption{\textbf{The error decomposition is additive and coupling-dominated.} Shown are converged test-set MSE for each ablation. \textcolor{condblue}{OT coupling, time-conditioned}~(\textcolor{condblue}{$\bullet$}); \textcolor{condblue}{OT coupling, time-blind}~(\textcolor{condblue}{$\circ$}); \textcolor{blindorange}{naive coupling, time-conditioned}~(\textcolor{blindorange}{$\bullet$}); \textcolor{blindorange}{naive coupling, time-blind}~(\textcolor{blindorange}{$\circ$}). The large vertical drop between lines is the coupling variance (Term~II); the small horizontal gap within each line is the time-blindness gap (Term~III). On both datasets the coupling effect is consistent regardless of time conditioning (additivity), and the time-blindness gap is negligible.}
\label{fig:loss_decomposition}
\end{figure}

\subsection{The Decomposition is Additive and Coupling is the Dominant Lever}
\label{sec:exp_decomp}

We train flow matching models on each dataset, varying (i)~whether the model conditions on time $t$ or receives $t = 0$ at every step, and (ii)~whether the coupling is independent or mini-batch OT (exact EMD solver). Each model is evaluated under its own training coupling, so the measured loss reflects the irreducible floor of the coupling it was trained with (details in Appendix~\ref{app:exp_loss_decomp}). Figure~\ref{fig:loss_decomposition} and Table~\ref{tab:main_results} confirm three predictions:

\begin{enumerate}[leftmargin=1.2em, labelsep=0.4em, itemsep=2pt, topsep=2pt]
\item \textbf{Coupling and time conditioning appear to be distinct, approximately additive levers.} If coupling and time conditioning act independently, switching from independent to OT coupling should produce a similar improvement with and without time conditioning. Across all three datasets, switching from independent to OT coupling produces nearly the same improvement in both the time-conditioned and time-blind rows. This suggests that improved coupling does not simply compensate for missing time information, and that removing time conditioning does not erase the benefit of OT coupling.

\item \textbf{Coupling variance dominates the loss floor.} The time-blindness gap should be small relative to the coupling effect on real, high-dimensional data (Section~\ref{sec:cost_time_independence}). On CIFAR-10 the coupling effect is roughly $25\times$ the time effect (dropping time conditioning costs $<\!0.7\%$ of total loss); on CelebA-HQ and FFHQ the time-blindness gap accounts for $<\!2\%$ of total loss.

\item \textbf{OT coupling improves both loss and sample quality.} Converged FID-50k mirrors the loss decomposition: OT coupling improves FID on every dataset regardless of whether the model is time-conditioned or time-blind, while the difference between time-conditioned and time-blind variants remains small (Table~\ref{tab:main_results}, Appendix~\ref{app:exp_loss_decomp}).
\end{enumerate}

\subsection{Validating Time Identifiability on Real Data}
\label{sec:exp_identifiability}

Theorem~\ref{thm:t_identifiable} predicts that the interpolation time $t$ is recoverable from $z$ alone at rate $O(1/\sqrt{d-k})$, with errors approximately Gaussian. We construct the noise-subspace estimator $\hat{t}(z)$ from the proof and compare its empirical error distribution against the theoretical Gaussian on the synthetic spiked covariance model and on each of our three real datasets, fitting the spiked covariance via PCA at the $95\%$ explained-variance threshold for the theoretical prediction.

\paragraph{Fitting the spiked covariance to real data.} Given a dataset we compute the sample covariance $\hat{\Sigma}$ and eigendecompose it as $\hat{\Sigma} = V\,\mathrm{diag}(\hat{\lambda}_1, \dots, \hat{\lambda}_d)\,V^\top$ with eigenvalues sorted descending. At a chosen rank $k$ we set the signal subspace $U_x$ to the top-$k$ eigenvectors, the isotropic floor $\sigma^2 = \frac{1}{d-k}\sum_{j>k}\hat{\lambda}_j$ to the mean of the trailing eigenvalues, and the spike eigenvalues $\Lambda_i = \max(\hat{\lambda}_i - \sigma^2,\,0)$ for $i \le k$, so that $U_x\Lambda U_x^\top + \sigma^2 I_d$ recovers the empirical top-$k$ spectrum and collapses the residual tail into a single isotropic term. These parameters enter only the theoretical Gaussian prediction; the data $x$ in $z = (1-t)\varepsilon + tx$ are actual samples drawn from the dataset, not from this Gaussian.

\noindent The theorem makes four confirmed predictions (details in Appendix~\ref{app:exp_identifiability}):

\begin{figure}[!t]
\centering
\includegraphics[width=\linewidth]{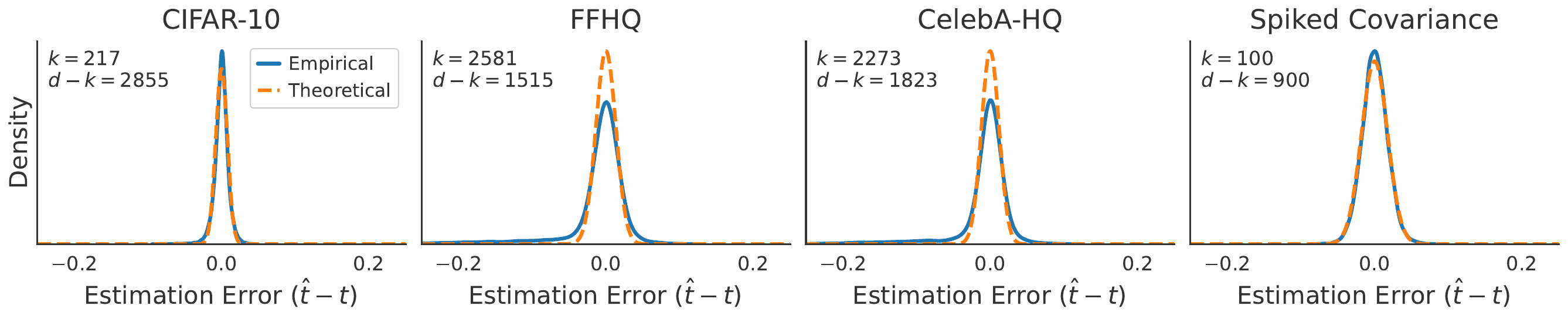}
\vspace{-1.5em}
\caption{\textbf{Empirical estimates of $t$ on real datasets are precise and match the theoretically predicted bound.} Distribution of estimation error $\hat{t} - t$ for the noise-subspace estimator across four settings: a synthetic spiked covariance model, CIFAR-10 (pixel space), FFHQ (VAE latent), and CelebA-HQ (VAE latent), with $k$ chosen at the $95\%$ explained-variance threshold per dataset. Empirical (solid) closely matches the theoretical Gaussian prediction (dashed) from the spiked covariance bound (Theorem~\ref{thm:t_identifiable}).}
\label{fig:t_estimation}
\vspace{-1em}
\end{figure}

\begin{enumerate}[leftmargin=1.2em, labelsep=0.4em, itemsep=2pt, topsep=2pt]
\item \textbf{Predictions are tight in absolute terms.} With $d-k$ in the thousands, the empirical estimates track closely around the true time on every panel. The empirical histograms are sharply peaked at zero and the spread shrinks as $d-k$ grows, consistent with the $O(1/\sqrt{d-k})$ rate predicted by Theorem~\ref{thm:t_identifiable}.

\item \textbf{The closed-form estimator achieves the predicted rate on both synthetic and real data.} Under the spiked Gaussian where Theorem~\ref{thm:t_identifiable}'s assumptions hold exactly, the empirical histogram of $\hat{t}-t$ tracks the theoretical Gaussian closely. The same prediction transfers to real, non-Gaussian data: fitting the spiked model via PCA yields a theoretical Gaussian that matches the empirical error distribution on CIFAR-10, FFHQ, and CelebA-HQ. The bound requires only that $d-k$ be large, not strict low-rank concentration, so even on VAE latents whose spectra are visibly flatter than pixel CIFAR-10 (Figure~\ref{fig:explained_variance}), $d-k$ remains in the thousands and the estimator achieves the predicted rate.

\item \textbf{Approximate invertibility is not a major concern in practice.} Because $t^*$ sits close to $1$ on every dataset, the descending-branch inverse covers nearly the full interpolation path; even when $t$ is sampled uniformly over $[0, 1]$, the mean absolute error remains small on every dataset ($0.006$ on CIFAR-10, $0.021$ on FFHQ, $0.018$ on CelebA-HQ; Appendix~\ref{app:exp_clock_critical_point}).
\end{enumerate}

\subsection{Closed-Form Validation of the Variance Decomposition}
\label{sec:exp_ambient_dimension}

To isolate the dimensional scaling predicted by our theory from confounds of training and finite-sample noise, we evaluate the three loss terms in an idealized spiked-covariance setting where each admits a closed-form or low-variance numerical estimator. Our analysis makes three sharp predictions about how the ambient dimension $d$ and signal dimension $k$ shape the three terms of Proposition~\ref{prop:three_term}. We test all three on the spiked Gaussian (details in Appendix~\ref{app:exp_time_gap}). Figure~\ref{fig:ambient_dim_analysis} reports each along a complementary slice of the $(d, k)$ grid.

\begin{enumerate}[leftmargin=1.2em, labelsep=0.4em, itemsep=2pt, topsep=2pt]

\item
\textbf{Shrinking the noise subspace inflates the gap.} At fixed $d$, growing $k$ shrinks the $(d-k)$-dimensional noise subspace through which $t$ leaks into $z$, so the time-blindness gap should explode. At $d=1024$, sweeping $k$ blows up the gap (Figure~\ref{fig:ambient_dim_analysis}, left).

\item
\textbf{Enlarging the ambient space collapses the gap.} At fixed $k$, growing $d$ enlarges the noise subspace and Theorem~\ref{thm:time_independence_cost} predicts the ratio $\Delta / \mathcal{C}_\pi$ scales as $1/(d-k)$. At $k=64$, sweeping $d$ drives the gap toward zero while Terms~I and III stay bounded (Figure~\ref{fig:ambient_dim_analysis}, right).

\item
\textbf{Coupling variance dominates the gap across the plane.} On the bulk of the $(d, k)$ plane (anywhere off the $k \approx d$ diagonal), the coupling variance should dwarf the gap. The heatmap (center) is light everywhere except along the diagonal, and in both slice plots the coupling curve sits well above the gap throughout.
\end{enumerate}

\begin{figure}[t]
\centering
\includegraphics[width=\linewidth]{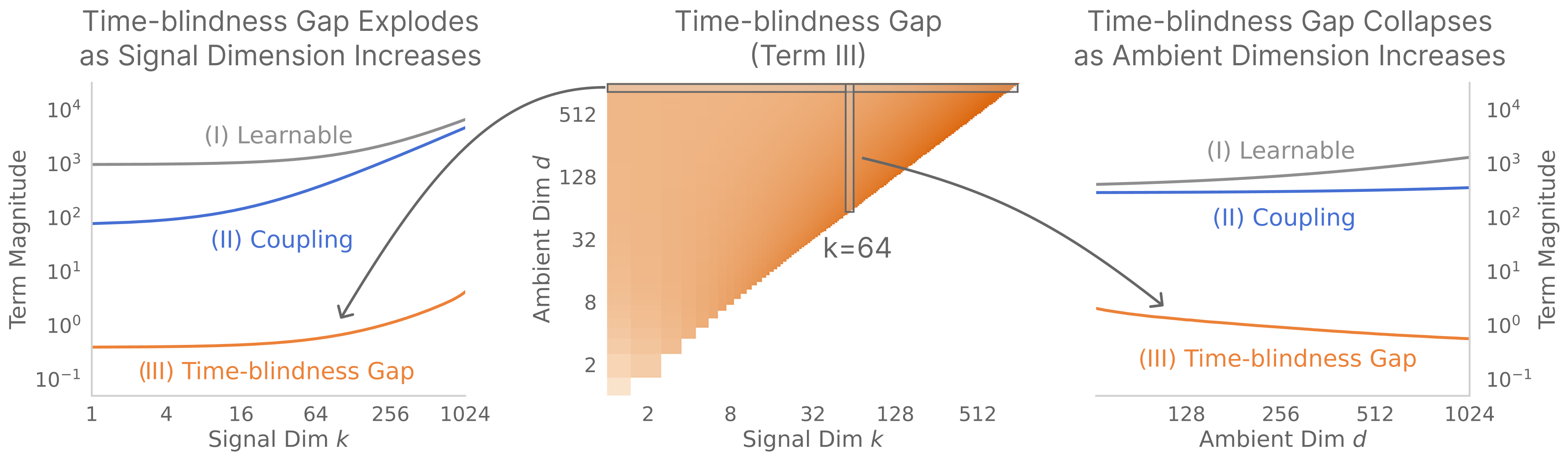}
\vspace{-1em}
\caption{\textbf{The time-blindness gap is governed by $d-k$ and dwarfed by the coupling variance.} (Left) Growing $k$ at fixed $d=1024$ inflates both the gap and the coupling loss. (Center) Heatmap of the gap magnitude across the $(d,k)$ plane. (Right) Growing $d$ at fixed $k=64$ collapses the gap.}
\label{fig:ambient_dim_analysis}
\end{figure}

\section{Related Work}
\label{sec:related_work}
\paragraph{Noise-level estimation and blind denoising.}
Recovering the noise level from a corrupted observation is a classical problem in image processing, with estimators based on local image statistics, patch structure, and low-rank covariance structure~\citep{shin2005block,liu2013single,chen2015efficient}. Modern blind denoisers instead learn this behavior from data, including convolutional, self-supervised, and real-image blind-denoising methods~\citep{guo2019toward,batson2019noise2self,zhang2023blind}. These works motivate the possibility that an explicit noise or time label may be redundant when the corrupted sample itself contains enough information to infer it.

\paragraph{Blind generative models and intrinsic dimension.}
Recent work has extended this question to generative modeling. \citet{kadkhodaie2026blind} show that blind denoising diffusion models can follow an implicit noise schedule under low intrinsic dimension, building on the view that natural images concentrate near low-dimensional geometric structure~\citep{henaff2014local,pope2021intrinsic,brown2023verifying}. Related perspectives connect denoising to score estimation and empirical Bayes identities~\citep{miyasawa1961empirical,efron2011tweedie,raphan2011least}, and to autonomous or time-unconditional generative dynamics~\citep{sahraeeardakan2026geometrynoisediffusionmodels,wang2025equilibriummatchinggenerativemodeling}. Our setting differs from these works because flow matching contains an additional ambiguity: even if time is known, the target velocity remains random through the coupling between source and target samples.

\paragraph{Noise conditioning, residual dimension, and coupling in flow matching.}
Closest to our motivation, \citet{sun2025noiseconditioning} analyze why noise-unconditional denoising and flow-based generative models can remain effective, arguing that the posterior over the hidden noise level concentrates in high ambient dimension. We refine this dimensional picture for low-rank data in flow matching: the identifiable clock lives in the residual noise subspace orthogonal to the data, so the relevant dimension is $d-k$, not $d$ alone. This distinction matters when the intrinsic dimension grows with the ambient dimension. In parallel, prior work has shown that flow-matching performance depends strongly on the source--target coupling, with improved couplings based on rectified flows, optimal transport, and minibatch optimal transport~\citep{liu2022flowstraightfastlearning,tong2024improvinggeneralizingflowbasedgenerative,fatras2021minibatch}. Our contribution is to disentangle these effects: we compare the time-blindness gap against the irreducible coupling variance and show why coupling can remain the dominant lever even when time is hidden.

\section{Discussion}
\label{sec:discussion}

Our results show that time conditioning is not always essential in flow matching: in high-dimensional settings, the noisy interpolant can itself reveal the hidden time, because residual noise directions act like an implicit clock. Once time can be inferred from the sample, the dominant source of error is the ambiguity induced by how noise and data are paired, so coupling matter more than explicit time conditioning. This also points toward more powerful time-independent generative models, including equilibrium-style models that learn autonomous dynamics without an externally supplied time variable. Several directions remain open. Our theory identifies one mechanism for time recovery, but it remains to validate which mechanisms trained networks actually use. Future work could probe whether learned models rely on residual noise energy, data-subspace structure, or other implicit time signals. The theory could also be refined beyond the spiked-covariance setting toward nonlinear data geometry, anisotropic covariance, learned representations, or statistics that exploit both residual noise directions and information inside the data subspace.

\section*{Acknowledgements}

This work is in part supported by the National Science Foundation Graduate Research Fellowship and NSF grant 2403297. We would also like to thank Vikranth Srivatsa for his feedback on the manuscript.

\bibliographystyle{plainnat}
\bibliography{references}

\newpage
\appendix
\counterwithin{table}{section}
\counterwithin{figure}{section}

\section*{Appendix Contents}
\startcontents[appendix]
\printcontents[appendix]{l}{1}{\setcounter{tocdepth}{2}}
\newpage

\section{Proofs for Section~\ref{sec:bv_decomp}: Isolating Sources of Error}\label{app:error_sources}

\subsection{Anatomy of the Time-Blind Loss}\label{app:coupling_error}

To isolate the effect of the coupling cleanly, we analyze the time-blind loss over all vector fields $f:\mathbb{R}^d\to\mathbb{R}^d$. We combine the standard bias--variance decomposition with the law of total variance to obtain a single three-term decomposition.

\threeterm*

\begin{proof}
\textbf{Step 1: Bias--variance decomposition.}
By the standard bias--variance decomposition (conditioning on $z$), the cross term $\mathbb{E}[(f(z) - \mathbb{E}[u \mid z])^\top(\mathbb{E}[u \mid z] - u) \mid z] = 0$ vanishes, giving
\begin{equation}\label{eq:bv_intermediate}
\mathcal{L}_{\mathrm{TB}}(f) = \underbrace{\mathbb{E}[\| f(z) - \mathbb{E}[u \mid z] \|^2]}_{\text{Reducible (bias)}} + \underbrace{\mathbb{E}[\mathrm{Var}(u \mid z)]}_{\text{Irreducible (variance)}}.
\end{equation}

\textbf{Step 2: Law of total variance.}
We now split the irreducible term $\mathbb{E}[\mathrm{Var}(u \mid z)]$ by conditioning further on $t$. By the law of total variance,
\[
\mathrm{Var}(u \mid z) = \mathbb{E}[\mathrm{Var}(u \mid z, t) \mid z] + \mathrm{Var}(\mathbb{E}[u \mid z, t] \mid z).
\]
Taking the outer expectation over $z$:
\[
\underbrace{\mathbb{E}[\mathrm{Var}(u \mid z)]}_{\text{Irreducible variance}} = \underbrace{\mathbb{E}[\mathrm{Var}(u \mid z, t)]}_{\text{Coupling variance } \mathcal{C}_\pi} + \underbrace{\mathbb{E}[\mathrm{Var}(\mathbb{E}[u \mid z, t] \mid z)]}_{\text{Time-Blindness Gap } \Delta}.
\]
Since $u = x - \varepsilon$, conditioning on $(z, t)$ is equivalent to FM conditioning on $(z_t, t)$ with the same target, so $\mathcal{C}_\pi = \mathbb{E}[\mathrm{Var}(u_t \mid z_t, t)]$. Substituting into~\eqref{eq:bv_intermediate} gives the three-term decomposition.
\end{proof}

For use throughout the appendix, write
\[
    \Delta \;:=\; \mathbb{E}[\mathrm{Var}(\mathbb{E}[u \mid z, t] \mid z)],
    \qquad
    \mathcal{C}_\pi \;:=\; \mathbb{E}[\mathrm{Var}(u \mid z, t)],
\]
matching the underbrace labels above. Step 1 shows that the irreducible variance $\mathbb{E}[\mathrm{Var}(u \mid z)] = \mathcal{C}_\pi + \Delta$ depends only on the coupling $\pi$, so the only way to reduce it is to change the coupling. Step 2 further splits this into $\mathcal{C}_\pi$ and the time-blindness gap $\Delta$. The next subsection formalizes the comparison with flow matching.

\subsection{Time-Blind Flow Matching is at Least as Hard as Flow Matching}\label{app:fm_bias_variance}

The FM loss admits an analogous bias--variance decomposition. The coupling variance term (II) in Proposition~\ref{prop:three_term} corresponds exactly to the FM irreducible error floor.

\begin{alemma}[Bias--Variance Decomposition of FM Loss]\label{lem:fm_bias_variance}
Let $z_t = (1-t)\varepsilon + tx$ and $u_t = x - \varepsilon$ under a fixed joint distribution of $(\varepsilon, x)$, and let $v : \mathbb{R}^d \times [0,1] \to \mathbb{R}^d$ be any predictor. Then the FM loss decomposes as
\[
\mathcal{L}_{\mathrm{FM}}(v) := \mathbb{E}[\| v(z_t, t) - u_t \|^2]
= \underbrace{\mathbb{E}\big[\| v(z_t, t) - \mathbb{E}[u_t \mid z_t, t] \|^2\big]}_{\text{Reducible (bias)}}
+ \underbrace{\mathbb{E}[\mathrm{Var}(u_t \mid z_t, t)]}_{\text{Irreducible (variance)}}.
\]
In particular, $\min_v \mathcal{L}_{\mathrm{FM}}(v) = \mathbb{E}[\mathrm{Var}(u_t \mid z_t, t)]$, achieved by $v^*(z_t, t) = \mathbb{E}[u_t \mid z_t, t]$.
\end{alemma}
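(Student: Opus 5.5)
The argument is the standard orthogonal-projection (bias--variance) argument, now conditioning on the pair $(z_t,t)$ instead of on $z$ alone. It mirrors Step~1 of the proof of Proposition~\ref{prop:three_term}, with $\sigma(z_t,t)$ playing the role of $\sigma(z)$.

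\textbf{Step 1 (add and subtract the conditional mean).} Set $m(z_t,t) := \mathbb{E}[u_t \mid z_t, t]$. This is well defined whenever $\mathbb{E}\|u_t\|^2 < \infty$, which holds when $\varepsilon$ and $x$ have finite second moments; we assume this throughout. Write $v(z_t,t) - u_t = \bigl(v(z_t,t) - m(z_t,t)\bigr) + \bigl(m(z_t,t) - u_t\bigr)$ and expand the squared norm. This gives three pieces: the squared bias term, the squared residual $\|m - u_t\|^2$, and the cross term $2\,\bigl(v - m\bigr)^\top\bigl(m - u_t\bigr)$.

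\textbf{Step 2 (kill the cross term).} Use the tower property, conditioning on $(z_t,t)$. The factor $v(z_t,t) - m(z_t,t)$ is $(z_t,t)$-measurable, so it can be pulled out of the inner expectation. We are then left with $\mathbb{E}[m - u_t \mid z_t,t] = 0$, so the cross term has zero expectation. The only subtlety is justifying the pull-out and the finiteness of the expectation. If $\mathcal{L}_{\mathrm{FM}}(v) = \infty$, the identity holds trivially with both sides infinite. Otherwise $v(z_t,t) \in L^2$, and Cauchy--Schwarz makes the cross term integrable.

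\textbf{Step 3 (identify the variance and the minimizer).} Take the expectation of the residual term, again conditioning on $(z_t,t)$. Then $\mathbb{E}\bigl[\|u_t - m\|^2 \mid z_t,t\bigr] = \mathrm{Var}(u_t \mid z_t,t)$, read as the trace of the conditional covariance, which is the convention used in Proposition~\ref{prop:three_term}. Integrating gives the irreducible term. This term does not depend on $v$, and the bias term is nonnegative and vanishes exactly when $v = m$ almost surely. Hence $\min_v \mathcal{L}_{\mathrm{FM}}(v) = \mathbb{E}[\mathrm{Var}(u_t\mid z_t,t)]$, attained at $v^* = m$.

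No step is a real obstacle. The only care needed is the measurability and integrability bookkeeping in Step~2 and the convention that $\mathrm{Var}$ of a vector means the trace of its covariance. I would also remark that the irreducible term coincides with $\mathcal{C}_\pi$ from Proposition~\ref{prop:three_term}, since $(z,t)$ there is exactly $(z_t,t)$ here.
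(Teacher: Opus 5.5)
Your proposal is correct and matches the paper's proof, which simply reruns the add-and-subtract, cross-term-vanishing argument of Step~1 of Proposition~\ref{prop:three_term} with conditioning on $(z_t,t)$ in place of $z$. Your integrability bookkeeping (finite second moments, Cauchy--Schwarz for the cross term, and the trivial case where the loss is infinite) is extra care that the paper leaves implicit.
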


\begin{proof}
The proof is identical to Step~1 of Proposition~\ref{prop:three_term}, with conditioning on $(z_t, t)$ instead of $z$ alone.
\end{proof}

Combining the FM floor with the three-term decomposition immediately gives:

\begin{acorollary}[Time-blind irreducible error $\geq$ FM]\label{cor:eqm_geq_fm}
The irreducible time-blind error is at least as large as the FM irreducible error:
\[
    \min_f \mathcal{L}_{\mathrm{TB}}(f)
    \;=\; \mathcal{C}_\pi + \Delta
    \;\geq\; \underbrace{\mathcal{C}_\pi}_{\min_v \mathcal{L}_{\mathrm{FM}}(v)}.
\]
The gap $\Delta \ge 0$ is the cost of time-independence: it vanishes exactly when the time-aware optimal velocity $\mathbb{E}[u \mid z, t]$ is, almost surely, already determined by $z$.
\end{acorollary}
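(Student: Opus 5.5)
The plan is to combine Lemma~\ref{lem:fm_bias_variance} with the three-term decomposition of Proposition~\ref{prop:three_term}, and then to characterize when the gap term vanishes.

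First, I identify $\min_f \mathcal{L}_{\mathrm{TB}}(f)$. In Proposition~\ref{prop:three_term}, terms II and III do not depend on $f$. Term I is nonnegative and equals zero at $f^*(z)=\mathbb{E}[u\mid z]$; this function is measurable, and it is square-integrable because $u$ has finite second moment, which holds since $\varepsilon$ and $x$ do. So the minimum exists and equals $\mathcal{C}_\pi+\Delta$. For the floor of flow matching, Lemma~\ref{lem:fm_bias_variance} gives $\min_v\mathcal{L}_{\mathrm{FM}}(v)=\mathbb{E}[\mathrm{Var}(u_t\mid z_t,t)]$. This coincides with $\mathcal{C}_\pi$ because the joint law of $(\varepsilon,x,t,z)$ is the same in both settings, as already noted at the end of the proof of Proposition~\ref{prop:three_term}.

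Next, the inequality. The gap $\Delta$ is the expectation of a conditional variance of a vector, i.e.\ of $\mathbb{E}\bigl[\|v^*(z,t)-\mathbb{E}[v^*(z,t)\mid z]\|^2 \,\big|\, z\bigr]\ge 0$. Hence $\Delta\ge 0$, and therefore $\mathcal{C}_\pi+\Delta\ge\mathcal{C}_\pi$.

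Finally, the equality case. $\Delta=0$ holds iff the nonnegative integrand vanishes almost surely. That is, $v^*(z,t)=\mathbb{E}[v^*(z,t)\mid z]$ a.s., so $v^*(z,t)$ coincides a.s.\ with a $\sigma(z)$-measurable random variable, which is what ``determined by $z$'' means. Conversely, suppose $v^*(z,t)=g(z)$ a.s.\ for some measurable $g$. Then $\mathbb{E}[v^*\mid z]=g(z)$, so the conditional variance is zero and $\Delta=0$.

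The only delicate point is the measure-theoretic bookkeeping. Conditional expectations are defined only up to null sets, so "determined by $z$" must be read as equality a.s.\ with a version of a $\sigma(z)$-measurable function. The argument also needs the finite second moments that make the variances well defined. I expect no genuine obstacle beyond stating these conventions.
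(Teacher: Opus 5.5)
Your proposal is correct and follows the paper's route: the paper gives no separate proof, treating the corollary as an immediate consequence of combining Lemma~\ref{lem:fm_bias_variance} with Proposition~\ref{prop:three_term}, with $\Delta \ge 0$ because it is an expected conditional variance. Your explicit treatment of the equality case ($\Delta=0$ iff $v^*(z,t)=\mathbb{E}[v^*(z,t)\mid z]$ a.s.) and of the square-integrability needed for the minimizer $\mathbb{E}[u\mid z]$ spells out what the paper leaves implicit.
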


When the data distribution has low intrinsic dimensionality $k$ relative to the ambient dimension $d$, $\Delta$ is bounded by $O(1/(d-k))$ times $\mathcal{C}_\pi$; we formalize this in Appendix~\ref{app:relative_time_independence_cost} (Theorem~\ref{thm:relative_time_independence_cost}). Crucially, this geometric mitigation leaves $\mathcal{C}_\pi$ unaddressed: it remains an independent, controllable source of difficulty that motivates the OT coupling strategy developed in this work.

\subsection{Relating Irreducible Variance and the Straightness of Paths}\label{app:straightness}

\begin{definition}[Straightness~{\citep{liu2022flowstraightfastlearning}}]\label{def:straightness}
Let $z_t$ be the trajectory of an ODE with velocity field $v$ and endpoints $(\varepsilon, x) \sim \pi$. The \emph{straightness defect} is
\begin{equation}
    S \;:=\; \int_0^1 \mathbb{E}\big[\|(x - \varepsilon) - v(z_t, t)\|^2\big]\,dt.
\end{equation}
$S = 0$ iff all trajectories are constant-velocity straight lines, in which case single-step Euler integration is exact.
\end{definition}

Applied to the optimal time-blind velocity $f^*(z) = \mathbb{E}[u \mid z]$, the straightness defect equals the irreducible variance:
\begin{equation}\label{eq:auto_straightness_app}
S_{\mathrm{auto}} \;:=\; \int_0^1 \mathbb{E}\bigl[\|(x - \varepsilon) - f^*(z_t)\|^2\bigr]\,dt \;=\; \mathbb{E}\bigl[\mathrm{Var}(u \mid z)\bigr],
\end{equation}
since $f^*(z_t) = \mathbb{E}[u \mid z_t]$ makes the integrand independent of $t$. By the law of total variance (Proposition~\ref{prop:three_term}),
\begin{equation}\label{eq:curvature_decomp}
S_{\mathrm{auto}} \;=\; \underbrace{\mathcal{C}_\pi}_{S_{\mathrm{FM}}} \;+\; \Delta,
\end{equation}
where $S_{\mathrm{FM}} = \int_0^1 \mathbb{E}[\|(x - \varepsilon) - v^*(z_t, t)\|^2]\,dt$ is the straightness defect of \citet{liu2022flowstraightfastlearning} applied to the optimal FM velocity, and $\Delta$ is the time-blindness gap.

\section{Proofs for Section~\ref{sec:identifiability}: Closing the Two Sources of Irreducible Error}\label{app:mitigations}

Proposition~\ref{prop:three_term} establishes that the irreducible error of time-blind flow matching is strictly higher than that of FM, raising the natural question of why time-blind models work at all in practice. Extending the analysis of \citet{kadkhodaie2026blind} from the diffusion setting to linear interpolants, we resolve this puzzle by showing that the interpolation time $t$ is almost entirely determined by the observation $z$ alone, so conditioning on $t$ provides little additional information and the gap $\Delta = \mathbb{E}[\operatorname{Var}(\mathbb{E}[u \mid z, t] \mid z)]$ becomes negligible relative to the irreducible coupling variance $\mathcal{C}_\pi$.

\paragraph{Mechanism: concentration reveals time.}
The observation $z_t = (1-t)\varepsilon + t\,x$ mixes noise and signal with weights determined by $t$. In high dimensions, simple statistics of $z_t$ (such as its projected norm) concentrate sharply around deterministic functions of $t$. If this concentration is sufficiently strong, $t$ can be recovered from $z$ alone, so conditioning on $t$ provides little additional information.

\subsection{Baseline Spiked Model and Notation}\label{app:spiked_model_notation}

The headline chain of this appendix, spiked identifiability (Subsection~\ref{app:low_rank_concentration}) $\Rightarrow$ absolute gap bound (Subsection~\ref{app:absolute_time_blindness_gap}) $\Rightarrow$ relative cost (Subsection~\ref{app:relative_time_independence_cost}), is built on a single baseline regime that we collect once here for reuse.

\paragraph{Spiked Gaussian model.}
The data follow a spiked Gaussian:
\begin{equation}
    x \sim \mathcal{N}(0, \Sigma),
    \qquad
    \Sigma = U_x \Lambda U_x^\top + \sigma^2 I_d,
\end{equation}
where $U_x \in \mathbb{R}^{d \times k}$ spans the $k$-dimensional signal subspace, $\Lambda = \mathrm{diag}(\lambda_1, \dots, \lambda_k)$ collects the signal variances with $\lambda_{\max} := \max_i \lambda_i < \infty$, and $\sigma^2 > 0$ gives isotropic residual variance in the orthogonal complement. This is a standard tractable proxy for the broader empirical observation that natural high-dimensional data concentrate near a low-dimensional subspace, used throughout high-dimensional statistics~\citep{johnstone2001largest} and the theoretical analysis of diffusion models~\citep{chen2023score, wang2024unreasonable}. Because all results are rotationally invariant, we may take $\Sigma$ diagonal in its eigenbasis with per-coordinate variances $S_i = \lambda_i$ for $i \le k$ and $S_i = \sigma^2$ for $i > k$.

\paragraph{Linear interpolant.}
We observe
\begin{equation}\label{eq:linear_interpolant_app}
    z_t = (1-t)\,\varepsilon + t\,x,
    \qquad \varepsilon \sim \mathcal{N}(0, I_d),
    \qquad t \sim \mathrm{Unif}(I),
\end{equation}
where $I \subset (0, 1)$ is a closed interval avoiding $t^* = 1/(1+\sigma^2)$.

\paragraph{Noise-subspace statistic.}
Let $P_\perp \in \mathbb{R}^{d \times d}$ project onto the $(d-k)$-dimensional orthogonal complement of the signal subspace. The key statistic is
\begin{equation}\label{eq:sigma_hat_perp}
    \widehat{\sigma}_\perp^2(z) \;:=\; \frac{1}{d-k}\,\|P_\perp z\|^2,
\end{equation}
which estimates the deterministic noise-subspace variance
\begin{equation}\label{eq:sigma_perp}
    \sigma_\perp^2(t) \;:=\; (1-t)^2 + t^2 \sigma^2.
\end{equation}
We also write $r_i(t) = (1-t)^2 + t^2 S_i$ for the per-coordinate interpolant variance and $\alpha_i(t) := (tS_i - (1-t))/r_i(t)$ for the per-coordinate optimal slope.

\paragraph{Noise-subspace time estimator.}
Inverting the variance clock $\sigma_\perp^2$ yields the noise-subspace time estimator used throughout the appendix:
\begin{equation}\label{eq:t_hat_app}
    \hat t(z) \;:=\; \bigl(\sigma_\perp^2\bigr)^{-1}\!\Bigl(\Pi_{\sigma_\perp^2(I)}\!\bigl(\widehat\sigma_\perp^2(z)\bigr)\Bigr),
\end{equation}
where $\Pi_{\sigma_\perp^2(I)}$ denotes projection onto the closed interval $\sigma_\perp^2(I) \subset \mathbb{R}$. The projection ensures $\hat t(z) \in I$ even when $\widehat\sigma_\perp^2(z)$ leaves the range of $\sigma_\perp^2$ on $I$ (a positive-probability event, since the underlying chi-square statistic is unbounded), and only sharpens the bounds below since projection cannot increase distance to any point in the interval.

\paragraph{Constants convention.}
All implicit constants in the $O(\cdot)$ statements that follow depend only on the time interval $I$, the noise floor $\sigma^2$, and the bounded signal spectrum $\lambda_{\max}$, not on $d$ or $k$.

\subsection{Identifiability under Spiked Covariance}\label{app:low_rank_concentration}

This appendix restates Theorem~\ref{thm:t_identifiable} from the main text in full formal detail and provides the complete proof. Working in the baseline spiked Gaussian model (Subsection~\ref{app:spiked_model_notation}), the concentration argument here takes a particularly clean form: projecting onto the noise subspace yields an isotropic Gaussian, so the projected norm follows a standard $\chi^2_{d-k}$ distribution (up to scaling), and the estimation rate scales with the noise-subspace dimension $d-k$.

\noindent\textbf{Theorem~\ref{thm:t_identifiable} (Restated).}\textit{ Under the baseline spiked model of Subsection~\ref{app:spiked_model_notation}, the estimator
\[
\hat{t}(z) \;=\; (\sigma_\perp^2)^{-1}\!\bigl(\widehat{\sigma}_\perp^2(z)\bigr)
\]
satisfies $\hat{t}(z) = t + O_p\!\left(1/\sqrt{d-k}\right)$ for $t$ bounded away from $t^* = 1/(1+\sigma^2)$.}

\begin{proof}
Write
\[
    \sigma_\perp^2(t) := (1-t)^2 + t^2\sigma^2.
\]
We show that the projected noise-subspace variance acts as a high-dimensional clock for the interpolation time. The proof has three steps: projection, concentration, and inversion.

\textbf{Step 1: Projecting onto the noise subspace produces an isotropic Gaussian clock.}
Let $P_\perp$ be the orthogonal projector onto the complement of the $k$-dimensional signal subspace. Projecting the interpolant gives
\[
    P_\perp z_t \;=\; (1-t)\,P_\perp \varepsilon \;+\; t\,P_\perp x.
\]
Under the spiked covariance model,
\[
    P_\perp \varepsilon \sim \mathcal{N}(0, I_{d-k}),
    \qquad
    P_\perp x \sim \mathcal{N}(0, \sigma^2 I_{d-k}),
\]
and these two projected terms are independent. Therefore, conditional on $t$,
\[
    P_\perp z_t \sim \mathcal{N}\!\bigl(0,\, \sigma_\perp^2(t)\, I_{d-k}\bigr).
\]
Thus
\[
    \widehat{\sigma}_\perp^2(z) \;:=\; \frac{1}{d-k}\,\|P_\perp z\|^2
\]
is the empirical second moment of $d-k$ independent Gaussian coordinates, each with variance $\sigma_\perp^2(t)$. This projected variance is the statistic from which we recover time.

\textbf{Step 2: The projected empirical variance concentrates around $\sigma_\perp^2(t)$.}
From the Gaussian representation above,
\[
    \widehat{\sigma}_\perp^2(z)
    \;\overset{d}{=}\;
    \sigma_\perp^2(t)\,\frac{Y}{d-k},
    \qquad
    Y \sim \chi^2_{d-k}.
\]
By standard chi-square concentration,
\[
    \frac{Y}{d-k} \;=\; 1 \;+\; O_p\!\left(\frac{1}{\sqrt{d-k}}\right).
\]
Therefore,
\[
    \widehat{\sigma}_\perp^2(z)
    \;=\; \sigma_\perp^2(t)\!\left(1 + O_p\!\left(\frac{1}{\sqrt{d-k}}\right)\right).
\]
Since $\sigma_\perp^2(t)$ is bounded on any compact interval $I \subset (0,1)$, this gives
\[
    \widehat{\sigma}_\perp^2(z) \;=\; \sigma_\perp^2(t) \;+\; O_p\!\left(\frac{1}{\sqrt{d-k}}\right).
\]

\textbf{Step 3: Inverting the variance clock transfers the rate to $t$.}
The derivative of the variance curve is
\[
    \frac{d}{dt}\sigma_\perp^2(t) \;=\; -2(1-t) + 2t\sigma^2,
\]
and vanishes only at $t^* = 1/(1+\sigma^2)$. Since $I$ is closed and $t^* \notin I$, $|(\sigma_\perp^2)'(t)|$ is bounded below on $I$, so $\sigma_\perp^2$ is monotone there and its inverse is Lipschitz on $\sigma_\perp^2(I)$. Define
\[
    \hat{t}(z) \;:=\; (\sigma_\perp^2)^{-1}\!\bigl(\widehat{\sigma}_\perp^2(z)\bigr).
\] Since $(\sigma_\perp^2)^{-1}\bigl(\sigma_\perp^2(t)\bigr) = t$, Lipschitzness of the inverse gives
\[
    |\hat{t}(z) - t| \;\le\; C\,\bigl|\widehat{\sigma}_\perp^2(z) - \sigma_\perp^2(t)\bigr|.
\]
Using Step 2,
\[
    \hat{t}(z) - t \;=\; O_p\!\left(\frac{1}{\sqrt{d-k}}\right).
\]
Equivalently,
\[
    \hat{t}(z) \;=\; t \;+\; O_p\!\left(\frac{1}{\sqrt{d-k}}\right).
\]
This proves the claimed identifiability rate.
\end{proof}

\subsection{Closed-Form Inversion of the Variance Clock}\label{app:closed_form_inversion}

The proof of Theorem~\ref{thm:t_identifiable} characterizes the estimator $\hat{t}(z) = (\sigma_\perp^2)^{-1}(\hat\sigma_\perp^2)$ abstractly via Lipschitzness of the inverse on a compact interval avoiding $t^*$. For the linear interpolant the inversion has a closed form, which is the formula evaluated in all experiments.

\paragraph{Inverting the variance clock.}
Expanding $(1-t)^2$, collecting like powers, and applying the quadratic formula:
\begin{align}
  \sigma_\perp^2(t)
    &\;=\; (1-t)^2 + t^2\sigma^2 \notag\\
    &\;=\; 1 - 2t + t^2 + t^2\sigma^2 \notag\\
    &\;=\; (1+\sigma^2)\,t^2 - 2t + 1, \notag
  \intertext{so setting $\sigma_\perp^2(t) = \hat\sigma_\perp^2$ and moving everything to one side gives the standard form $(1+\sigma^2)\,t^2 - 2t + (1-\hat\sigma_\perp^2) = 0$, whose two roots are}
  t_\pm
    &\;=\; \frac{2 \pm \sqrt{4 - 4(1+\sigma^2)(1 - \hat\sigma_\perp^2)}}
                {2(1+\sigma^2)}. \label{eq:t_hat_two_roots}
\end{align}

\paragraph{Branch selection.}
The clock $\sigma_\perp^2(t)$ is a strictly convex parabola attaining its global minimum
\[
  \sigma_\perp^2(t^*) \;=\; \frac{\sigma^2}{1+\sigma^2}
  \qquad \text{at} \qquad
  t^* \;=\; \frac{1}{1+\sigma^2}.
\]
Each value $\hat\sigma_\perp^2 \in (\sigma^2/(1+\sigma^2),\, 1]$ has two preimages: $t_-$ on the descending branch $[0, t^*]$ and $t_+$ on the ascending branch $[t^*, 1]$. On any working interval $I \subset [0, t^*]$ (the regime used throughout our experiments, where $\sigma^2 \ll 1$ pushes $t^*$ close to $1$) the inverse selects the descending branch, equivalently the minus sign:
\begin{equation}\label{eq:t_hat_inversion}
  \hat{t}(z)
  \;=\;
  \frac{2 - \sqrt{4 - 4(1+\sigma^2)(1 - \hat{\sigma}_\perp^2)}}
       {2(1+\sigma^2)}.
\end{equation}
This is the closed-form realization of the inverse $(\sigma_\perp^2)^{-1}$ used in Step~3 of the proof of Theorem~\ref{thm:t_identifiable}.

\subsection{Effective-Rank Generalization}\label{app:effective_rank_generalization}

The spiked proof above relies on a specific structural fact: projecting onto the noise subspace yields an isotropic Gaussian, so the projected norm $\|z_\perp\|^2$ is a standard $\chi^2_{d-k}$ up to the scalar $\sigma_\perp^2(t)$. Real data covariances rarely exhibit such a sharp signal/noise split; e.g.\ the latent covariances of trained VAEs typically decay smoothly over several orders of magnitude. We now show that the same project--concentrate--invert argument carries through for arbitrary PSD $\Sigma$ and any orthogonal projector $P$, with the isotropic $\chi^2_{d-k}$ replaced by a weighted chi-square whose effective sample size is the \emph{effective rank} of the projected interpolant covariance.

\paragraph{Effective dimension of the observation.}
The strength of the concentration argument depends on how many independent directions contribute to the statistic. In the isotropic setting of Appendix~\ref{app:low_rank_concentration}, concentration improves with the ambient dimension of the projected subspace, yielding the familiar $1/\sqrt{n}$ scaling for an $n$-dimensional average. In the anisotropic setting, however, different directions contribute unequally, and the relevant notion of dimension is no longer the ambient $n$ but the \emph{effective rank} of the underlying covariance: for a PSD matrix $A \in \mathbb{R}^{n \times n}$ with eigenvalues $\{\lambda_i\}$,
\[
    r_{\mathrm{eff}}(A)
    \;:=\; \frac{(\operatorname{tr} A)^2}{\operatorname{tr}(A^2)}
    \;=\; \frac{\bigl(\sum_i \lambda_i\bigr)^2}{\sum_i \lambda_i^2}.
\]
Unlike the algebraic rank, the effective rank quantifies how evenly variance is distributed across eigen-directions. When all eigenvalues are equal, $r_{\mathrm{eff}}(A) = n$; in highly anisotropic regimes it can be much smaller, reflecting a collapse onto a few dominant directions. It therefore plays the role of an intrinsic statistical dimension governing concentration rates, replacing the ambient dimension in heterogeneous settings.

\paragraph{Setup.}
Let $x \sim \mathcal{N}(0, \Sigma)$ with $\Sigma \in \mathbb{R}^{d \times d}$ an arbitrary PSD covariance, $z_t = (1-t)\varepsilon + tx$ with $\varepsilon \sim \mathcal{N}(0, I_d)$, and let $P$ be an orthogonal projector of rank $m$ with orthonormal basis $Q \in \mathbb{R}^{d \times m}$. Write $\Sigma_P := Q^\top \Sigma Q$ for the induced covariance on $\operatorname{range}(P)$, $\bar\mu_P := \operatorname{tr}(\Sigma_P)/m$ for its spectral mean, and $\Sigma_{P,t} := (1-t)^2 I_m + t^2 \Sigma_P$ for the projected interpolant covariance, whose effective rank we denote $r_{\mathrm{eff}}(P,t) := r_{\mathrm{eff}}(\Sigma_{P,t})$. As in the spiked case, the variance-to-time map $t \mapsto \sigma_t^2(\bar\mu_P) := (1-t)^2 + t^2\bar\mu_P$ has a unique critical point $t^*_{\bar\mu_P} := 1/(1+\bar\mu_P)$.

\begin{aproposition}[Effective-rank identifiability]\label{prop:general_cov_linear}
Under the setup above, on any closed interval avoiding $t^*_{\bar\mu_P}$, the estimator $\hat t(z)$ obtained by inverting $\sigma_t^2(\bar\mu_P) = \|Pz\|^2/m$ on the relevant branch satisfies
\[
    \hat t(z) \;=\; t \;+\; O_p\!\left(r_{\mathrm{eff}}(P,t)^{-1/2}\right).
\]
\end{aproposition}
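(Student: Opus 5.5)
The proof follows the same project--concentrate--invert template as the proof of Theorem~\ref{thm:t_identifiable}. The isotropic $\chi^2_{d-k}$ is replaced by a weighted chi-square, and Gaussian quadratic-form moments are used in place of the chi-square tail bound.

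\textbf{Step 1 (projection).} Write $Q^\top z_t = (1-t)Q^\top\varepsilon + tQ^\top x$. The two summands are independent centered Gaussians with covariances $I_m$ and $\Sigma_P$, so conditional on $t$ we have $Q^\top z_t \sim \mathcal{N}(0,\Sigma_{P,t})$. Since $\|Pz\|^2 = \|Q^\top z\|^2$, diagonalizing $\Sigma_{P,t} = V\,\mathrm{diag}(\mu_1,\dots,\mu_m)\,V^\top$ gives
\[
\|Pz\|^2 \overset{d}{=} \sum_{i=1}^m \mu_i g_i^2, \qquad g_i \overset{\text{iid}}{\sim}\mathcal{N}(0,1),
\]
with $\mu_i = (1-t)^2 + t^2\lambda_i(\Sigma_P)$.

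\textbf{Step 2 (concentration).} The weighted chi-square has mean $\operatorname{tr}\Sigma_{P,t} = m\,\sigma_t^2(\bar\mu_P)$ and variance $2\operatorname{tr}(\Sigma_{P,t}^2)$. Chebyshev's inequality then gives
\[
\frac{\|Pz\|^2}{\operatorname{tr}\Sigma_{P,t}} - 1 = O_p\!\left(\sqrt{\frac{2\operatorname{tr}(\Sigma_{P,t}^2)}{(\operatorname{tr}\Sigma_{P,t})^2}}\right) = O_p\!\left(r_{\mathrm{eff}}(P,t)^{-1/2}\right).
\]
Hanson--Wright or Laurent--Massart would give exponential tails, but Chebyshev suffices for an $O_p$ statement. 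Multiplying by $\sigma_t^2(\bar\mu_P)$, which is bounded on a compact $I$ because $(1-t)^2 \le 1$ and $\bar\mu_P$ is fixed, yields
\[
\|Pz\|^2/m = \sigma_t^2(\bar\mu_P) + O_p\!\left(r_{\mathrm{eff}}^{-1/2}\right).
\]
A key point is that the centering is exact: the mean of $\|Pz\|^2/m$ is precisely $(1-t)^2 + t^2\bar\mu_P$, the scalar clock. No bias term arises from anisotropy, because the clock is defined through the spectral mean $\bar\mu_P$.

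\textbf{Step 3 (inversion).} The derivative $\frac{d}{dt}\sigma_t^2(\bar\mu_P) = -2(1-t) + 2t\bar\mu_P$ vanishes only at $t^*_{\bar\mu_P}$. On a closed interval $I$ avoiding this point, the derivative is bounded below in absolute value, so the inverse on the branch is Lipschitz with constant $C_I$. Composing with the projection $\Pi$ onto $\sigma_t^2(I)$, which is $1$-Lipschitz and fixes $\sigma_t^2(t)$, as in~\eqref{eq:t_hat_app}, gives
\[
|\hat t - t| \le C_I\,\bigl|\|Pz\|^2/m - \sigma_t^2(\bar\mu_P)\bigr|,
\]
and hence the claimed rate.

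The main obstacle is uniformity in $t$: both $r_{\mathrm{eff}}(P,t)$ and the Chebyshev bound depend on $t$. To handle this, I would state the result pointwise in $t$ (conditional on $t$), which is what $O_p$ means here. I would also note that $r_{\mathrm{eff}}(\Sigma_{P,t}) \ge 1$ is continuous in $t$, so on compact $I$ its minimum controls a uniform version. Since $(1-t)^2 I$ only adds isotropic mass, $r_{\mathrm{eff}}(\Sigma_{P,t})$ is at least of order $r_{\mathrm{eff}}(\Sigma_P)$ on $I$ whenever $t$ is bounded away from $0$, and near $t=0$ it approaches $m$. Bounding this carefully is the one place where constants depending on $I$ and $\bar\mu_P$ enter.
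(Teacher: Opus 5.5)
Your proposal is correct and follows the paper's proof step for step. You write $\|Pz\|^2$ as a weighted chi-square, apply Chebyshev with variance $2\operatorname{tr}(\Sigma_{P,t}^2)$ around the exactly centered clock $\sigma_t^2(\bar\mu_P)$ to get the $r_{\mathrm{eff}}(P,t)^{-1/2}$ rate, and finish with Lipschitz inversion on an interval avoiding $t^*_{\bar\mu_P}$. Your uniformity remark can be sharpened: adding isotropic mass never decreases effective rank, so $r_{\mathrm{eff}}(\Sigma_{P,t}) \ge r_{\mathrm{eff}}(\Sigma_P)$ for every $t$, not only for $t$ bounded away from $0$.
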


\begin{proof}
The argument follows the same project--concentrate--invert structure as Theorem~\ref{thm:t_identifiable}; only the concentration step changes.

\textbf{Step 1: Weighted-Gaussian projection.}
Since $P = QQ^\top$, $\|Pz_t\|^2 = \|Q^\top z_t\|^2$, and $Q^\top z_t = (1-t) Q^\top \varepsilon + t\, Q^\top x$ with $Q^\top \varepsilon \sim \mathcal{N}(0, I_m)$ and $Q^\top x \sim \mathcal{N}(0, \Sigma_P)$ independent. Rotating to the eigenbasis of $\Sigma_P$ decomposes $Q^\top z_t$ into independent Gaussian coordinates with per-coordinate variance $\sigma_{t,i}^2 := (1-t)^2 + t^2 \mu_i$, where $\{\mu_i\}_{i=1}^m$ are the eigenvalues of $\Sigma_P$. Writing $\chi^2_{1,i}$ for i.i.d.\ $\chi^2_1$ variates,
\[
    \|Pz_t\|^2 \;\overset{d}{=}\; \sum_{i=1}^m \sigma_{t,i}^2 \,\chi^2_{1,i}.
\]
This is the spiked Step~1 with weights $\sigma_{t,i}^2$ in place of a common scalar $\sigma_\perp^2(t)$, and $\Sigma_{P,t}$ has eigenvalues exactly $\{\sigma_{t,i}^2\}_{i=1}^m$.

\textbf{Step 2: Concentration via the effective-rank identity.}
Let $\hat q := \|Pz\|^2 / m$. Using $\mathbb{E}[\chi^2_1] = 1$ and $\operatorname{Var}(\chi^2_1) = 2$,
\[
    \mathbb{E}[\hat q] \;=\; \tfrac{1}{m}\sum_i \sigma_{t,i}^2 \;=\; \sigma_t^2(\bar\mu_P), \qquad
    \operatorname{Var}(\hat q) \;=\; \frac{2}{m^2}\sum_i (\sigma_{t,i}^2)^2.
\]
Recognising $\sum_i \sigma_{t,i}^2 = \operatorname{tr}\Sigma_{P,t} = m\,\sigma_t^2(\bar\mu_P)$ and $\sum_i (\sigma_{t,i}^2)^2 = \operatorname{tr}(\Sigma_{P,t}^2)$, the definition $r_{\mathrm{eff}}(P,t) = (\operatorname{tr}\Sigma_{P,t})^2 / \operatorname{tr}(\Sigma_{P,t}^2)$ gives $\sum_i (\sigma_{t,i}^2)^2 = m^2 \sigma_t^2(\bar\mu_P)^2 / r_{\mathrm{eff}}(P,t)$, so
\[
    \operatorname{Var}(\hat q) \;=\; \frac{2\,\sigma_t^2(\bar\mu_P)^2}{r_{\mathrm{eff}}(P,t)}.
\]
The effective rank acts as an effective sample size for the heteroskedastic average $\hat q$: it equals $m$ when all $\sigma_{t,i}^2$ are equal and degrades smoothly under spectral anisotropy. Chebyshev's inequality applied to $\hat q$ yields
\[
    \hat q(z) \;=\; \sigma_t^2(\bar\mu_P) \;+\; O_p\!\left(\frac{\sigma_t^2(\bar\mu_P)}{\sqrt{r_{\mathrm{eff}}(P,t)}}\right).
\]
This is the only step where the proof diverges from Theorem~\ref{thm:t_identifiable}: the textbook $\chi^2_{d-k}$ rate $1/\sqrt{d-k}$ is replaced by the weighted-chi-square rate $1/\sqrt{r_{\mathrm{eff}}(P,t)}$, with the latter reducing to the former when the weights are constant.

\textbf{Step 3: Lipschitz inversion.}
The variance-to-time map $t \mapsto \sigma_t^2(\bar\mu_P)$ has derivative $-2(1-t) + 2t\bar\mu_P$, vanishing only at $t^*_{\bar\mu_P}$. On any closed interval $I$ with $t^*_{\bar\mu_P} \notin I$, $|(\sigma_t^2(\bar\mu_P))'|$ is bounded below, so the inverse is Lipschitz on $\sigma_t^2(\bar\mu_P)(I)$ with constant $C$ depending only on $I$ and $\bar\mu_P$. The argument from Step~3 of Theorem~\ref{thm:t_identifiable} then gives
\[
    |\hat t(z) - t| \;\leq\; C\,\bigl|\hat q(z) - \sigma_t^2(\bar\mu_P)\bigr| \;=\; O_p\!\left(r_{\mathrm{eff}}(P,t)^{-1/2}\right),
\]
which is the claimed rate.
\end{proof}

\begin{acorollary}[Spiked recovery]\label{cor:spiked_from_general}
Theorem~\ref{thm:t_identifiable} is the case of Proposition~\ref{prop:general_cov_linear} with $\Sigma = U_x \Lambda U_x^\top + \sigma^2 I_d$ and $P = P_\perp$.
\end{acorollary}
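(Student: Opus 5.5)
The corollary follows by specializing Proposition~\ref{prop:general_cov_linear}. I would take $\Sigma = U_x \Lambda U_x^\top + \sigma^2 I_d$ and $P = P_\perp$, with rank $m = d-k$ and orthonormal basis $Q = U_\perp$. I would then compute the projected objects, check that the general estimator coincides with the spiked one, and check that the general rate reduces to $1/\sqrt{d-k}$.

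\textbf{Projected covariance.} Since $U_\perp^\top U_x = 0$, we get $\Sigma_P = U_\perp^\top \Sigma U_\perp = \sigma^2 I_{d-k}$. Hence the spectral mean is $\bar\mu_P = \sigma^2$, so the variance-to-time map $\sigma_t^2(\bar\mu_P) = (1-t)^2 + t^2\sigma^2$ equals the clock $\sigma_\perp^2(t)$ of~\eqref{eq:sigma_perp}. Its critical point is $t^*_{\bar\mu_P} = 1/(1+\sigma^2) = t^*$.

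\textbf{Estimator and interval.} With $m = d-k$, the statistic $\|Pz\|^2/m$ is exactly $\widehat\sigma_\perp^2(z)$. Inverting it on the branch containing $I$ therefore gives the estimator of Theorem~\ref{thm:t_identifiable}. The condition ``closed interval avoiding $t^*_{\bar\mu_P}$'' is the same as the theorem's condition that $t$ is bounded away from $t^*$.

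\textbf{Rate.} The projected interpolant covariance is $\Sigma_{P,t} = \bigl((1-t)^2 + t^2\sigma^2\bigr) I_{d-k} = \sigma_\perp^2(t)\, I_{d-k}$. This is a scalar multiple of the identity, so
\[
r_{\mathrm{eff}}(P,t) = \frac{(d-k)^2\sigma_\perp^2(t)^2}{(d-k)\sigma_\perp^2(t)^2} = d-k .
\]
The general rate $O_p(r_{\mathrm{eff}}^{-1/2})$ thus becomes $O_p(1/\sqrt{d-k})$. The Chebyshev step also reproduces the $\chi^2_{d-k}$ variance $2\sigma_\perp^2(t)^2/(d-k)$ of the spiked proof.

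\textbf{Main obstacle.} There is no real difficulty; the only point needing care is uniformity of the constants. The proposition's Lipschitz constant depends on $I$ and $\bar\mu_P$. Since $\bar\mu_P = \sigma^2$ does not depend on $d$ or $k$, this constant depends only on $(I, \sigma^2)$, in line with the constants convention. That makes the $O_p$ statement uniform as $d-k \to \infty$, which completes the identification.
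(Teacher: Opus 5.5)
Your proposal is correct and follows the paper's proof essentially step for step: orthogonality ($U_\perp^\top U_x = 0$) collapses $\Sigma_P$ to $\sigma^2 I_{d-k}$. This gives $\bar\mu_P = \sigma^2$, $t^*_{\bar\mu_P} = t^*$, $\Sigma_{P,t} = \sigma_\perp^2(t)\, I_{d-k}$, and hence $r_{\mathrm{eff}}(P,t) = d-k$. Your extra checks, that the general statistic $\|Pz\|^2/m$ coincides with $\widehat\sigma_\perp^2(z)$ and that the Lipschitz constant depends only on $(I,\sigma^2)$ and so is uniform in $d-k$, are correct details the paper leaves implicit.
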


\begin{proof}
Under the spiked covariance and $P = P_\perp$, $P_\perp U_x = 0$, so the induced covariance collapses to the isotropic residual $\Sigma_P = \sigma^2 I_{d-k}$. All eigenvalues $\mu_i$ equal $\sigma^2$, so $\bar\mu_P = \sigma^2$, $t^*_{\bar\mu_P} = 1/(1+\sigma^2) = t^*$, and $\Sigma_{P,t} = \sigma_\perp^2(t)\,I_{d-k}$. Constant eigenvalues give $r_{\mathrm{eff}}(P,t) = (d-k)^2/(d-k) = d-k$, and the Proposition's rate becomes the $O_p(1/\sqrt{d-k})$ bound of Theorem~\ref{thm:t_identifiable}.
\end{proof}

\paragraph{Scope of the spiked model in the rest of the paper.}
Proposition~\ref{prop:general_cov_linear} establishes that the identifiability mechanism is not specific to isotropic residuals; the spiked model provides the cleanest closed-form constants but no essential structure. The remainder of the paper retains the spiked model for two distinct reasons. First, the propagation from identifiability to the time-blindness gap $\Delta$ (Section~\ref{app:absolute_time_blindness_gap}) requires closed-form per-coordinate fourth-moment calculations that the spiked structure makes available; the corresponding anisotropic calculation introduces additional weighted-moment bookkeeping without changing the qualitative conclusion that $\Delta/\mathcal{C}_\pi$ vanishes with the residual-subspace dimension. Second, the empirical predictor in Section~\ref{sec:exp_identifiability} uses the spiked-derived $1/\sqrt{d-k}$ rate because it has a single interpretable parameter ($k$ at a fixed explained-variance threshold) that we report transparently per dataset, and the empirical fit on VAE latents with visibly anisotropic spectra (Figure~\ref{fig:t_estimation}) shows the spiked prediction is already tight without the effective-rank refinement. Proposition~\ref{prop:general_cov_linear} is therefore best read as a robustness statement about the identifiability mechanism, not as a competing predictor or an empirical hypothesis.

\subsection{Absolute Bound on the Time-Blindness Gap}\label{app:absolute_time_blindness_gap}

\begin{aproposition}[Absolute time-blindness gap bound]\label{prop:absolute_time_blindness_gap}
Under the spiked covariance model with $t \sim \mathrm{Unif}(I)$,
\begin{equation}\label{eq:delta_abs_bound}
    \Delta \;=\; O\!\left(\frac{d}{d-k}\right),
\end{equation}
where $\Delta := \mathbb{E}_z\!\left[\mathrm{Var}_{t \mid z}\!\big(\mathbb{E}[u \mid z, t]\big)\right]$.
\end{aproposition}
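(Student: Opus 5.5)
The plan is to avoid computing the posterior $t\mid z$ exactly. I would instead use the variational characterization of conditional variance: for any $\sigma(z)$-measurable estimator $\tilde t(z)$,
\[
\mathrm{Var}\bigl(v^*(z,t)\mid z\bigr)\;\le\;\mathbb{E}\bigl[\|v^*(z,t)-v^*(z,\tilde t(z))\|^2\mid z\bigr].
\]
This holds because the conditional mean minimizes conditional mean-squared error. Taking expectations gives
\[
\Delta\;\le\;\mathbb{E}\,\|v^*(z,t)-v^*(z,\hat t(z))\|^2 ,
\]
with $\hat t$ the projected noise-subspace estimator of~\eqref{eq:t_hat_app}.

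The first step is the closed form of $v^*$. Under independent coupling with Gaussian data, in the eigenbasis of $\Sigma$ each coordinate is jointly Gaussian. Regressing $u_i=x_i-\varepsilon_i$ on $z_i$ gives $v^*_i(z,t)=\alpha_i(t)\,z_i$, with $\alpha_i(t)=(tS_i-(1-t))/r_i(t)$ as defined in Subsection~\ref{app:spiked_model_notation}.

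The second step is a Lipschitz bound. Since $t\in I\subset(0,1)$ and $r_i(t)\ge \min_{t\in I}\bigl((1-t)^2+t^2\sigma^2\bigr)>0$, and $S_i\le \lambda_{\max}+\sigma^2$, each $\alpha_i$ is smooth on $I$. Hence $|\alpha_i(t)-\alpha_i(s)|\le L|t-s|$ for $s,t\in I$, with $L$ depending only on $I,\sigma^2,\lambda_{\max}$. The projection in~\eqref{eq:t_hat_app} keeps $\hat t\in I$, so this applies. We get
\[
\Delta\;\le\; L^2\,\mathbb{E}\Bigl[(t-\hat t)^2\,\|z\|^2\Bigr].
\]

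The third step, which I expect to be the main obstacle, is to control this expectation. The issue is that $\hat t$ depends on $z$, so the factors are not independent, and Theorem~\ref{thm:t_identifiable} only gives an $O_p$ statement, not a moment bound. I would apply Cauchy--Schwarz:
\[
\mathbb{E}\bigl[(t-\hat t)^2\|z\|^2\bigr]\le \bigl(\mathbb{E}(t-\hat t)^4\bigr)^{1/2}\bigl(\mathbb{E}\|z\|^4\bigr)^{1/2}.
\]

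For the first factor, I would strengthen Step 2 of the proof of Theorem~\ref{thm:t_identifiable} to a fourth-moment bound. The inverse clock is Lipschitz, and the projection onto $\sigma_\perp^2(I)$ is a contraction, so $|\hat t-t|\le C\,|\widehat\sigma_\perp^2-\sigma_\perp^2(t)|$ holds deterministically. Then $\mathbb{E}\bigl(Y/(d-k)-1\bigr)^4=O\bigl((d-k)^{-2}\bigr)$ for $Y\sim\chi^2_{d-k}$, which is an explicit central-moment computation. This gives $\mathbb{E}(t-\hat t)^4=O\bigl((d-k)^{-2}\bigr)$.

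For the second factor, $\|z\|^2$ is a weighted chi-square with weights $r_i(t)=O(1)$. Therefore $\mathbb{E}\|z\|^4=\bigl(\sum_i r_i\bigr)^2+2\sum_i r_i^2=O(d^2)$.

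Combining the two factors gives $\Delta\le L^2\cdot O\bigl((d-k)^{-1}\bigr)\cdot O(d)=O\bigl(d/(d-k)\bigr)$. All constants depend only on $I$, $\sigma^2$ and $\lambda_{\max}$, as required by the constants convention.

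A refinement I would keep in reserve is to split $\|z\|^2$ into its signal and residual parts, $\|P z\|^2$ and $\|P_\perp z\|^2$. Because $P_\perp z$ is independent of $Pz$ given $t$, this could sharpen the bound. It is not needed for the stated rate.
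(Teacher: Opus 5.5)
Your proposal is correct and follows the paper's proof essentially step for step: the same plug-in predictor $v^*(z,\hat t(z))$ to upper-bound the conditional variance, the same uniform Lipschitz bound on the $\alpha_i$, and the same deterministic reduction $|\hat t - t|\le C\,|\widehat\sigma_\perp^2 - \sigma_\perp^2(t)|$ via the nonexpansive projection and the Lipschitz inverse clock. The only difference is the final moment bound. The paper uses exactly the signal/noise split you held in reserve: conditional independence of $P_x z$ and $P_\perp z$ gives an $O(k/(d-k))$ term, and the exact moment $\mathbb{E}[Y(Y-m)^2]=2m^2+8m$ gives an $O(1)$ term. Your Cauchy--Schwarz step with fourth moments reaches the same $O(d/(d-k))$ order and loses only a constant factor.
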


\begin{proof}
Let
\[
    m \;:=\; d - k,
    \qquad
    q(t) \;:=\; \sigma_\perp^2(t) \;=\; (1-t)^2 + t^2\sigma^2,
    \qquad
    \widehat q(z) \;:=\; \frac{\|P_\perp z\|^2}{m}.
\]
The proof has two steps. First, we upper-bound the time-blindness gap by the error of a concrete time-blind predictor that estimates $t$ from $z$ (a brief inverse-clock argument turns this into the error of the noise-subspace variance estimate $\widehat q$). Second, we bound the resulting weighted variance-estimation error.

\textbf{Step 1: Reduce the gap to time-estimation error.}
Recall that
\[
    \Delta \;=\; \mathbb{E}\!\left[\bigl\|v^*(z,t) - \mathbb{E}[v^*(z,t)\mid z]\bigr\|^2\right],
\]
where $v^*(z,t) = \mathbb{E}[u \mid z, t]$ is the optimal time-aware predictor.

The conditional expectation $\mathbb{E}[v^*(z,t)\mid z]$ is the best predictor of $v^*(z,t)$ among predictors that only depend on $z$. Therefore, for any such predictor $g(z)$,
\[
    \Delta \;\le\; \mathbb{E}\!\left[\|v^*(z,t) - g(z)\|^2\right].
\]
We use the particular predictor
\[
    g(z) \;:=\; v^*(z, \hat t(z)),
\]
where $\hat t(z) \in I$ is the time estimate constructed in Step 2. This gives
\[
    \Delta \;\le\; \mathbb{E}\!\left[\|v^*(z,t) - v^*(z, \hat t(z))\|^2\right].
\]

It remains to control how much $v^*(z,t)$ changes when $t$ is replaced by $\hat t(z)$. In the diagonal Gaussian model,
\[
    v^*_i(z, t) \;=\; \alpha_i(t)\, z_i,
    \qquad
    \alpha_i(t) \;=\; \frac{tS_i - (1-t)}{(1-t)^2 + t^2 S_i}.
\]
On $I$, the functions $\alpha_i$ are uniformly Lipschitz across all coordinates, since $S_i \in [\sigma^2, \lambda_{\max}]$ and the denominator is bounded away from zero. Thus there is a constant $L = L(I, \sigma^2, \lambda_{\max})$ such that
\[
    |\alpha_i(t) - \alpha_i(s)| \;\le\; L\,|t - s| \qquad \text{for all } i \text{ and } s, t \in I.
\]
Therefore,
\[
    \|v^*(z,t) - v^*(z, s)\|^2
    \;=\; \sum_{i=1}^d \bigl(\alpha_i(t) - \alpha_i(s)\bigr)^2 z_i^2
    \;\le\; L^2\,\|z\|^2\,(t - s)^2.
\]
Taking $s = \hat t(z)$ yields
\begin{equation}\label{eq:gap_to_time_est}
    \Delta \;\le\; L^2\,\mathbb{E}\!\left[\|z\|^2\,(\hat t(z) - t)^2\right].
\end{equation}

Since $I$ is closed and $t^* = 1/(1+\sigma^2) \notin I$, $|q'(t)|$ is bounded below on $I$, so $q^{-1}$ is Lipschitz on $q(I)$. Apply this with $\hat t(z)$ from~\eqref{eq:t_hat_app}: since projection onto $q(I)$ cannot increase distance to $q(t) \in q(I)$, there is a constant $C = C(I, \sigma^2)$ such that
\[
    |\hat t(z) - t| \;\le\; C\,|\widehat q(z) - q(t)|.
\]
Therefore
\begin{equation}\label{eq:time_to_var_est}
    \mathbb{E}\!\left[\|z\|^2(\hat t(z) - t)^2\right] \;\le\; C^2\,\mathbb{E}\!\left[\|z\|^2(\widehat q(z) - q(t))^2\right].
\end{equation}

\textbf{Step 2: Bound the weighted variance-estimation error.}
Condition on $t$. In the noise subspace,
\[
    P_\perp z_t \;\sim\; \mathcal{N}(0, q(t) I_m),
\]
so
\[
    \widehat q \;=\; q(t)\,\frac{Y}{m},
    \qquad
    Y \sim \chi^2_m.
\]
For readability, write $q := q(t)$, so
\[
    \widehat q - q \;=\; \frac{q}{m}(Y - m).
\]

Split the energy into signal and noise components:
\[
    \|z\|^2 \;=\; \|P_x z\|^2 + \|P_\perp z\|^2.
\]
We bound the two resulting terms separately.

For the signal component, $\widehat q$ depends only on $P_\perp z$, and under the diagonal spiked Gaussian model with independent $\varepsilon, x$, the projected blocks $P_x z$ and $P_\perp z$ are independent conditional on $t$. Hence
\[
    \mathbb{E}\!\left[\|P_x z\|^2 (\widehat q - q)^2 \mid t\right]
    \;=\; \mathbb{E}[\|P_x z\|^2 \mid t]\cdot \mathbb{E}[(\widehat q - q)^2 \mid t].
\]
The first factor is $O(k)$, since the $k$ signal coordinates have uniformly bounded variance on $I$. The second factor is
\[
    \mathbb{E}[(\widehat q - q)^2 \mid t]
    \;=\; q^2 \,\mathrm{Var}(Y/m)
    \;=\; \frac{2 q^2}{m}
    \;=\; O\!\left(\frac{1}{m}\right),
\]
because $q(t)$ is bounded on $I$. Therefore,
\[
    \mathbb{E}\!\left[\|P_x z\|^2(\widehat q - q)^2 \mid t\right] \;=\; O\!\left(\frac{k}{m}\right).
\]

For the noise component, use
\[
    \|P_\perp z\|^2 \;=\; q\,Y,
    \qquad
    \widehat q - q \;=\; \frac{q}{m}(Y - m).
\]
Then
\[
    \mathbb{E}\!\left[\|P_\perp z\|^2(\widehat q - q)^2 \mid t\right] \;=\; \frac{q^3}{m^2}\,\mathbb{E}[Y(Y-m)^2].
\]
By standard chi-square moment bounds,\footnote{Indeed, for $Y \sim \chi^2_m$, $\mathbb{E}[(Y-m)^2] = 2m$ and $\mathbb{E}[(Y-m)^3] = 8m$, so $\mathbb{E}[Y(Y-m)^2] = 2m^2 + 8m$.}
\[
    \mathbb{E}[Y(Y-m)^2] \;=\; O(m^2).
\]
Since $q(t)$ is uniformly bounded on $I$, the noise contribution is
\[
    \mathbb{E}\!\left[\|P_\perp z\|^2(\widehat q - q)^2 \mid t\right] \;=\; O(1).
\]

Combining the signal and noise contributions,
\[
    \mathbb{E}\!\left[\|z\|^2(\widehat q - q)^2 \mid t\right]
    \;=\; O\!\left(\frac{k}{m}\right) + O(1)
    \;=\; O\!\left(1 + \frac{k}{m}\right)
    \;=\; O\!\left(\frac{d}{d-k}\right).
\]
The bound is uniform over $t \in I$, so averaging over $t \sim \mathrm{Unif}(I)$ gives
\[
    \mathbb{E}\!\left[\|z\|^2(\widehat q(z) - q(t))^2\right] \;=\; O\!\left(\frac{d}{d-k}\right).
\]
Combining this with~\eqref{eq:gap_to_time_est} and~\eqref{eq:time_to_var_est} proves $\Delta = O(d/(d-k))$.
\end{proof}

\subsection{Independent Coupling Has Extensive Variance}\label{app:independent_coupling_variance}

The coupling variance $\mathcal{C}_\pi$ is the irreducible regression floor of (time-conditioned) flow matching: it is paid by even the optimal time-aware predictor and depends only on the choice of coupling $\pi$ (Lemma~\ref{lem:fm_bias_variance}). Under independent coupling and the baseline spiked Gaussian model (Subsection~\ref{app:spiked_model_notation}), $\mathcal{C}_\pi$ scales linearly with the ambient dimension; the proof reduces to a per-coordinate Gaussian closed form.

\begin{aproposition}[Independent coupling has extensive variance]\label{prop:independent_coupling_variance}
Under the spiked Gaussian model with independent coupling and $t \sim \mathrm{Unif}(I)$,
\[
    \mathcal{C}_\pi \;=\; \Omega(d).
\]
\end{aproposition}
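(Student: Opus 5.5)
The plan is to compute $\mathcal{C}_\pi$ in closed form coordinate by coordinate and then bound each coordinate's contribution below by a constant. I would work in the diagonal eigenbasis of the spiked model, with per-coordinate variances $S_i \in \{\lambda_1,\dots,\lambda_k,\sigma^2\}$, so that $S_i \in [\sigma^2, \sigma^2+\lambda_{\max}]$ if $\Lambda$ denotes excess variances. Under independent coupling, $\varepsilon \sim \mathcal{N}(0,I_d)$ and $x \sim \mathcal{N}(0,\mathrm{diag}(S))$ are independent. Conditional on $t$, the pair $(z_i,u_i) = ((1-t)\varepsilon_i + t x_i,\; x_i - \varepsilon_i)$ is therefore jointly Gaussian and independent across coordinates. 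It follows that $\mathrm{Var}(u \mid z, t)$ is diagonal, with $i$-th entry given by the Gaussian conditional-variance formula.

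For coordinate $i$, I would compute $\mathrm{Var}(u_i \mid t) = 1 + S_i$, $\mathrm{Var}(z_i \mid t) = r_i(t)$, and $\mathrm{Cov}(u_i,z_i \mid t) = tS_i - (1-t)$. This gives
\[
\mathrm{Var}(u_i \mid z_i, t) \;=\; 1 + S_i - \frac{(tS_i-(1-t))^2}{r_i(t)} \;=\; \frac{(1+S_i)r_i(t) - (tS_i - (1-t))^2}{r_i(t)}.
\]
Expanding the numerator gives
\[
(1-t)^2 + t^2S_i + S_i(1-t)^2 + t^2S_i^2 - t^2S_i^2 + 2t(1-t)S_i - (1-t)^2 \;=\; S_i\bigl(t^2 + (1-t)^2 + 2t(1-t)\bigr) \;=\; S_i.
\]
So the conditional variance is exactly $S_i/r_i(t)$, a clean closed form which is consistent with $\alpha_i(t)$ being the optimal slope in the gap proof.

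Summing over coordinates gives
\[
\mathcal{C}_\pi = \mathbb{E}_{t\sim\mathrm{Unif}(I)}\sum_{i=1}^d \frac{S_i}{(1-t)^2 + t^2 S_i}.
\]
Each term is nonnegative. I would discard the signal coordinates and keep only the $d-k$ residual ones, or alternatively bound every term below. For $t \in I \subset (0,1)$ and $S_i \in [\sigma^2, \sigma^2+\lambda_{\max}]$, the denominator is at most $1 + \sigma^2 + \lambda_{\max}$, so each summand is at least $\sigma^2/(1+\sigma^2+\lambda_{\max})$. Hence $\mathcal{C}_\pi \ge d\,\sigma^2/(1+\sigma^2+\lambda_{\max}) = \Omega(d)$, with the constant depending only on $\sigma^2$ and $\lambda_{\max}$, as the constants convention requires.

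The only real obstacle is the algebraic simplification of the conditional variance, and the one point needing care there is confirming the cross-covariance sign, $\mathrm{Cov}(x_i - \varepsilon_i,\,(1-t)\varepsilon_i + t x_i) = tS_i - (1-t)$. Even if that simplification were mishandled, a cruder route still works: $\mathrm{Var}(u_i \mid z_i,t) \ge \mathrm{Var}(u_i \mid \varepsilon_i, x_i \text{ projected})$ is not directly usable, but one can write $u_i$ as a function of $z_i$ and the independent direction $t\varepsilon_i - (1-t)x_i$ (suitably normalized) and bound the variance along that direction below by a constant. The exact identity $S_i/r_i(t)$ makes that detour unnecessary.
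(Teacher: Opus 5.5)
Your proposal is correct and follows essentially the same route as the paper: coordinatewise Gaussian conditioning gives the exact identity $\mathrm{Var}(u_i \mid z_i, t) = S_i/\bigl((1-t)^2+t^2S_i\bigr)$, which is bounded below uniformly in $i$ and $t$ and then summed over all $d$ coordinates. The only difference is the final bound: the paper uses monotonicity in $S_i$ and minimizes $\sigma^2/\sigma_\perp^2(t)$ over $I$, whereas you bound the numerator by $\sigma^2$ and the denominator by $1+\sigma^2+\lambda_{\max}$. Note also that your option of discarding the signal coordinates would only give $\Omega(d-k)$, so bounding every term is the route you actually need.
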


\begin{proof}
\textbf{Coordinate reduction.}
Recall that, working in the eigenbasis of \(\Sigma\), we write
\[
    x_i \sim \mathcal N(0,S_i),
\]
where \(S_i\) denotes the \(i\)-th eigenvalue of the data covariance \(\Sigma\). Under the spiked model,
\[
    S_i =
    \begin{cases}
        \lambda_i+\sigma^2, & i \le k,\\
        \sigma^2, & i > k.
    \end{cases}
\]
Because the independent coupling makes \(x\) and \(\varepsilon\) independent, and because both are diagonal in this basis, the coordinates decouple. Hence it suffices to compute the one-dimensional conditional variance and then sum over coordinates.

\textbf{Gaussian conditioning.}
Fix a coordinate \(i\). Conditional on \(t\), the pair
\[
    u_i = x_i-\varepsilon_i,
    \qquad
    z_i = (1-t)\varepsilon_i + t x_i
\]
is jointly Gaussian. Since \(x_i \sim \mathcal N(0,S_i)\) and
\(\varepsilon_i \sim \mathcal N(0,1)\) are independent,
\[
    \operatorname{Var}(u_i \mid t) = S_i+1,
\]
\[
    \operatorname{Var}(z_i \mid t)
    = (1-t)^2 + t^2 S_i,
\]
and
\[
    \operatorname{Cov}(u_i,z_i \mid t)
    =
    \operatorname{Cov}(x_i-\varepsilon_i,
    (1-t)\varepsilon_i+t x_i)
    =
    tS_i-(1-t).
\]

Since \((u_i,z_i)\) is jointly Gaussian conditional on \(t\), the scalar Gaussian conditioning formula gives
\[
    \operatorname{Var}(u_i\mid z_i,t)
    =
    \operatorname{Var}(u_i\mid t)
    -
    \frac{\operatorname{Cov}(u_i,z_i\mid t)^2}
    {\operatorname{Var}(z_i\mid t)}.
\]
Substituting the above expressions gives
\[
    \operatorname{Var}(u_i\mid z_i,t)
    =
    (S_i+1)
    -
    \frac{(tS_i-(1-t))^2}{(1-t)^2+t^2S_i}.
\]

A direct simplification yields
\[
    \operatorname{Var}(u_i \mid z_i,t)
    =
    \frac{S_i}{(1-t)^2+t^2S_i}.
\]

\textbf{Uniform lower bound and summation.}
Since \(S \mapsto S/((1-t)^2+t^2S)\) is increasing for each fixed \(t\in I\), and since \(S_i\ge \sigma^2>0\), we have
\[
    \frac{S_i}{(1-t)^2+t^2S_i}
    \ge
    \frac{\sigma^2}{(1-t)^2+t^2\sigma^2}.
\]

Because \(I\subset(0,1)\) is closed and bounded, the right-hand side has a positive minimum on \(I\). Define
\[
    c_0
    :=
    \min_{t\in I}
    \frac{\sigma^2}{(1-t)^2+t^2\sigma^2}
    >0.
\]
Therefore, uniformly over all coordinates \(i\) and all \(t\in I\),
\[
    \operatorname{Var}(u_i\mid z_i,t)
    \ge c_0.
\]
Thus,
\[
    \mathcal C_\pi
    =
    \sum_{i=1}^d
    \mathbb E_t\!\left[
        \operatorname{Var}(u_i\mid z_i,t)
    \right]
    \ge
    \sum_{i=1}^d c_0
    =
    c_0 d.
\]
Hence \(\mathcal C_\pi=\Omega(d)\).

\end{proof}

\subsection{Relative Cost of Time-Independence}\label{app:relative_time_independence_cost}

Combining the two preceding propositions: the time-blindness gap $\Delta$ is bounded relative to the coupling variance $\mathcal{C}_\pi$ by $O(1/(d-k))$. This is the headline result, equivalent to Theorem~\ref{thm:time_independence_cost} in the main text.

\begin{atheorem}[Relative cost of time-independence]\label{thm:relative_time_independence_cost}
Under the spiked Gaussian model with independent coupling and $t \sim \mathrm{Unif}(I)$,
\begin{equation}\label{eq:delta_ratio_bound}
    \frac{\Delta}{\mathcal{C}_\pi}
    \;=\; O\!\left(\frac{1}{d-k}\right).
\end{equation}
In particular, holding $k$ and the spectral parameters fixed, the ratio is $O(1/d)$.
\end{atheorem}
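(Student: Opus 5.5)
The plan is to divide the absolute gap bound of Proposition~\ref{prop:absolute_time_blindness_gap} by the coupling-variance lower bound of Proposition~\ref{prop:independent_coupling_variance}. Both are stated under exactly the present hypotheses: the spiked Gaussian model, independent coupling, $t \sim \mathrm{Unif}(I)$ with $I$ closed and avoiding $t^*$. Both carry constants depending only on $(I, \sigma^2, \lambda_{\max})$, per the constants convention of Subsection~\ref{app:spiked_model_notation}. Hence there are constants $C_1, c_0 > 0$, independent of $d$ and $k$, with
\[
\Delta \le C_1\,\frac{d}{d-k}, \qquad \mathcal{C}_\pi \ge c_0\, d .
\]
Dividing gives
\[
\frac{\Delta}{\mathcal{C}_\pi} \le \frac{C_1}{c_0}\cdot\frac{1}{d-k},
\]
which is \eqref{eq:delta_ratio_bound}. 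The special case follows at once: with $k$ fixed, $d-k \ge d/2$ for $d \ge 2k$, so the ratio is $O(1/d)$.

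The only genuine work is confirming that the two propositions use compatible conventions, so that the division is legitimate.

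\textbf{Signal covariance.} Subsection~\ref{app:spiked_model_notation} writes $S_i = \lambda_i$ for $i \le k$, while Proposition~\ref{prop:independent_coupling_variance} writes $S_i = \lambda_i + \sigma^2$. Either way, $S_i \in [\sigma^2, \lambda_{\max} + \sigma^2]$. The Lipschitz constant $L$ in Step~1 of Proposition~\ref{prop:absolute_time_blindness_gap} and the constant $c_0$ depend only on these spectral bounds, so I would note that both proofs go through verbatim under either reading.

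\textbf{Estimator.} The gap bound uses the projected estimator \eqref{eq:t_hat_app}, which lies in $I$. That constraint is what makes the uniform Lipschitz bound on $\alpha_i$ valid, and the paper has already established it.

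\textbf{Nondegeneracy.} I would check that $\mathcal{C}_\pi > 0$, so the ratio is well defined. This is immediate from $c_0 > 0$.

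The main obstacle I anticipate is the uniformity of constants rather than the algebra. The absolute bound hides a factor $d/(d-k)$ rather than an absolute constant, so I must check that the $O(1)$ noise-block term and the $O(k/m)$ signal-block term have constants free of $d$ and $k$. Both come from $\sup_{t\in I} q(t)$, from uniformly bounded signal variances, and from the exact chi-square moments $\mathbb{E}[Y(Y-m)^2] = 2m^2 + 8m \le 10m^2$. This gives $\Delta \le C_1(1 + k/m) = C_1 d/(d-k)$ with explicit $C_1$.

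Finally, I would remark on the regime where $d-k$ stays fixed while $k$ grows. Here the bound degrades only to $O(1)$, matching the main-text discussion after Theorem~\ref{thm:time_independence_cost}.
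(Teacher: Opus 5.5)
Your proposal is correct and is the paper's own argument: take $\Delta \le C_1\, d/(d-k)$ from Proposition~\ref{prop:absolute_time_blindness_gap} and $\mathcal{C}_\pi \ge c_0\, d$ from Proposition~\ref{prop:independent_coupling_variance}, with constants depending only on $(I,\sigma^2,\lambda_{\max})$, and divide so that the factor $d$ cancels. Your extra bookkeeping is sound, with one small correction: the bound $S_i \ge \sigma^2$ that $c_0 d$ relies on follows from the model $\Sigma = U_x\Lambda U_x^\top + \sigma^2 I_d$ itself, which gives $S_i = \lambda_i + \sigma^2$ on signal coordinates, and not from ``either reading,'' since under $S_i = \lambda_i$ one need not have $S_i \ge \sigma^2$.
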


\begin{proof}
By Proposition~\ref{prop:absolute_time_blindness_gap}, there is a constant $C_1 = C_1(I, \sigma^2, \lambda_{\max})$ such that
\[
    \Delta \;\le\; C_1\,\frac{d}{d-k}.
\]
By Proposition~\ref{prop:independent_coupling_variance}, $\mathcal{C}_\pi \ge c_0\, d$ for a constant $c_0 = c_0(I, \sigma^2, \lambda_{\max}) > 0$. Dividing,
\[
    \frac{\Delta}{\mathcal{C}_\pi}
    \;\le\; \frac{C_1\, d / (d-k)}{c_0\, d}
    \;=\; \frac{C_1/c_0}{d-k}
    \;=\; O\!\left(\frac{1}{d-k}\right),
\]
where the $d$ factor in the numerator and denominator cancels, leaving the rate determined entirely by the noise-subspace dimension $d-k$.
\end{proof}

\noindent The bound is constructive: Proposition~\ref{prop:absolute_time_blindness_gap} upper-bounds the optimal time-blind predictor by exhibiting one specific $g^*$ that uses the noise-subspace estimator; the actual optimum $\bar{v}(z) = \mathbb{E}[u \mid z]$ can only do better. The ambient dimension $d$ enters only through the energy normalization $\mathbb{E}[\|z\|^2] = \Theta(d)$ on both sides of the ratio and cancels; the statistical content is the $O(1/(d-k))$ rate, controlled by the noise-subspace dimension alone. Holding $k$ fixed and varying $d$ gives the $O(1/d)$ ratio scaling; holding $d - k$ fixed and varying $k$ leaves the ratio bounded by a constant, the prediction that pulls the bound apart from the per-dimension account of \citet{sun2025noiseconditioning} (Section~\ref{sec:related_work}). Note that the absolute gap $\Delta = O(d/(d-k))$ does not itself vanish when $d - k$ is comparable to $d$: it remains $O(1)$, and what vanishes is the ratio. Methods that suppress the coupling variance directly (optimal transport, mini-batch OT, rectified flows; Section~\ref{sec:cost_time_independence}) drive the dominant Term~II down, leaving $\Delta$ as the only remaining residual.

\section{Experimental Details}\label{app:experiments}

\subsection{Datasets and Architectures}\label{app:datasets}

Our experiments cover three datasets with two distinct backbones. CIFAR-10 is modeled in pixel space with an ADM UNet; CelebA-HQ and FFHQ are modeled in the latent space of a pretrained VAE using a SiT transformer. We document each setup below; later experiment subsections reference the relevant setup rather than redefining it.

\paragraph{Backbones.}
For CIFAR-10 we use an ADM UNet~\citep{dhariwal2021diffusion} with $128$ base channels, channel multipliers $[1, 2, 2, 2]$, $2$ residual blocks per resolution, multi-head attention at $16 \times 16$ resolution ($4$ heads, $64$ channels per head), and dropout $0.1$, totaling approximately $35.7$M parameters. For CelebA-HQ and FFHQ we use SiT-B/2: a transformer of depth $12$, hidden size $768$, $12$ attention heads, and patch size $2$, acting on $16 \times 16 = 256$ tokens after encoding $256 \times 256$ images to $4 \times 32 \times 32$ latents with the Stable Diffusion 1.5 VAE (\texttt{stabilityai/sd-vae-ft-ema}, latents cached once before training).

\begin{table}[t]
\centering
\small
\caption{Per-dataset training hyperparameters.}
\label{tab:hyperparams}
\begin{tabular}{lccc}
\toprule
\textbf{Setting} & \textbf{CIFAR-10} & \textbf{CelebA-HQ} & \textbf{FFHQ} \\
\midrule
\multicolumn{4}{l}{\textit{Architecture}} \\
Backbone & ADM UNet ($\approx\!35.7$M) & SiT-B/2 & SiT-B/2 \\
Image space & pixel & VAE latent (SD 1.5) & VAE latent (SD 1.5) \\
Resolution / $d$ & $32 \times 32 \times 3$, \ $3{,}072$ & $4 \times 32 \times 32$, \ $4{,}096$ & $4 \times 32 \times 32$, \ $4{,}096$ \\
\midrule
\multicolumn{4}{l}{\textit{Optimization}} \\
Optimizer & AdamW & AdamW & AdamW \\
Learning rate & $2 \times 10^{-4}$ & $1 \times 10^{-4}$ & $1 \times 10^{-4}$ \\
Gradient clip & $1.0$ & $1.0$ & $1.0$ \\
Warmup steps & $5{,}000$ & $5{,}000$ & $5{,}000$ \\
EMA decay & $0.9999$ & $0.9999$ & $0.9999$ \\
Magnitude function & \texttt{truncated}, $a = 0.8$ & --- & --- \\
\midrule
\multicolumn{4}{l}{\textit{Training}} \\
Iterations & $200{,}000$ & $100{,}000$ & $100{,}000$ \\
Batch size & $128$ per GPU & $256$ global & $256$ global \\
Hardware & 1$\times$ H200 & 1$\times$ H200 & 1$\times$ H200 \\
\midrule
\multicolumn{4}{l}{\textit{Sampling}} \\
Sampler & NAG & Euler & Euler \\
\bottomrule
\end{tabular}
\end{table}

\subsection{Loss Decomposition Validation}\label{app:exp_loss_decomp}

\paragraph{Setup.}
We train four flow matching models per dataset in a $2 \times 2$ factorial design, varying time conditioning (on/off) and coupling (independent/OT), on CIFAR-10, CelebA-HQ, and FFHQ. Within each dataset, all four variants share the identical architecture and hyperparameters; the only differences are (i)~whether the coupling is independent or mini-batch OT (exact EMD \texttt{pot\_emd}) and (ii)~whether the model conditions on $t$ or receives $t = 0$ at every step. The ``time-blind'' variant zeros the time input fed to the network at every step while keeping the time-embedding layers in the architecture, so the parameter count is identical across all four models. Per-dataset architectures, preprocessing, and training hyperparameters are specified in Appendix~\ref{app:datasets}; we summarize the cells below.

\paragraph{CIFAR-10.}
Pixel-space ADM UNet (Table~\ref{tab:hyperparams}) trained for $200{,}000$ steps with per-GPU batch size $128$. OT coupling uses exact EMD with $B = 128$.

\paragraph{CelebA-HQ.}
SiT-B/2 in the Stable Diffusion VAE latent space (Table~\ref{tab:hyperparams}) trained for $100{,}000$ steps with global batch size $256$. OT coupling uses exact EMD with $B = 256$.

\paragraph{FFHQ.}
SiT-B/2 in the Stable Diffusion VAE latent space (Table~\ref{tab:hyperparams}), with the same protocol as CelebA-HQ: $100{,}000$ steps, global batch size $256$, exact EMD with $B = 256$.

\paragraph{MSE evaluation protocol.}
We compute the mean MSE under EMA weights in eval mode as a measurement of the converged training-loss floor of each cell. The measured quantity is the irreducible loss floor of the training distribution: a property of the joint distribution over $(x, \varepsilon, t)$, not a generalization claim. To ensure a fair comparison, each model is evaluated under the coupling it was trained with: OT models (A, C) are evaluated with the same \texttt{pot\_emd} permutation applied per eval batch (using the $B$ matching training), while independent models (B, D) use random pairing.

\emph{CIFAR-10.}
CIFAR-10 has an official train/test partition (inherited from its classification-dataset origin), so we evaluate on its $10{,}000$-image test split (78 batches of 128 samples). The train-vs-test loss gap is $\le 0.0015$, well within the per-batch standard error, so this distinction does not affect the decomposition.

\emph{CelebA-HQ and FFHQ.}
These benchmarks are introduced in the literature as generative-model benchmarks without a canonical train/test partition~\citep{karras2018progressivegrowinggansimproved,karras2019stylebasedgeneratorarchitecturegenerative}, so following the convention of those benchmarks we evaluate on the same VAE-latent image set used for training, with fresh Gaussian noise $\varepsilon \sim \mathcal{N}(0, I)$ and uniform $t \sim \mathcal{U}[0,1]$ drawn for each batch (78 batches of 256 samples).

\paragraph{FID evaluation protocol.}
For each of the four cells per dataset we generate $50{,}000$ samples by integrating the trained vector field with $250$ Euler steps from $t=0$ to $t=1$ using the EMA weights, and report clean-FID~\citep{parmar2022aliased}. All four cells (A, B, C, D) per dataset use the identical sampler, sample count, and reference set; the only differences across cells are the model weights themselves. Per-dataset scoring details follow.

\emph{CIFAR-10.}
Samples are scored directly in pixel space against the standard CIFAR-10 training-set reference statistics.

\emph{CelebA-HQ and FFHQ.}
Samples are produced in VAE-latent space and decoded back to $256 \times 256$ pixels with the same \texttt{stabilityai/sd-vae-ft-ema} VAE used at training time before scoring against each dataset's full real distribution as the reference, following the convention of those benchmarks~\citep{karras2018progressivegrowinggansimproved,karras2019stylebasedgeneratorarchitecturegenerative}.

\begin{table}[t]
\centering
\caption{\textbf{Converged loss floor (MSE) and FID across datasets and methods.} MSE is the converged training-loss floor evaluated as described in the MSE protocol above (CIFAR-10 on its official test split; CelebA-HQ and FFHQ on the same image set with fresh $\varepsilon, t$, since no canonical test partition exists for these benchmarks). FID is clean-FID-50k against each dataset's full real distribution.}
\label{tab:main_results}
\begin{tabular}{l cc cc cc}
\toprule
 & \multicolumn{2}{c}{CIFAR-10} & \multicolumn{2}{c}{CelebA-HQ} & \multicolumn{2}{c}{FFHQ} \\
\cmidrule(lr){2-3} \cmidrule(lr){4-5} \cmidrule(lr){6-7}
Method & MSE $\downarrow$ & FID $\downarrow$ & MSE $\downarrow$ & FID $\downarrow$ & MSE $\downarrow$ & FID $\downarrow$ \\
\midrule
Time-conditioned, naive coupling (\textcolor{blindorange}{$\bullet$}) & 0.170 & 5.40 & 0.597 & 11.35 & 0.671 & 30.15 \\
Time-conditioned, OT coupling (\textcolor{condblue}{$\bullet$}) & 0.145 & 5.35 & 0.559 & 11.10 & 0.634 & 29.34 \\
Time-blind, naive coupling (\textcolor{blindorange}{$\circ$}) & 0.171 & 5.31 & 0.611 & 9.93 & 0.675 & 29.35 \\
Time-blind, OT coupling (\textcolor{condblue}{$\circ$}) & 0.146 & 5.22 & 0.567 & 10.31 & 0.637 & 29.10 \\
\bottomrule
\end{tabular}
\end{table}

\paragraph{Results.}
Converged loss-floor values for all three datasets are reported in Table~\ref{tab:main_results}. On CIFAR-10, the coupling effect is $\delta_{\text{coupling}} = 0.025$ regardless of time conditioning, and the time-blindness effect is $\delta_{\text{time}} = 0.001$. On CelebA-HQ, the coupling effect is $\delta_{\text{coupling}} = 0.038$--$0.044$ and the time-blindness effect is $\delta_{\text{time}} = 0.008$--$0.014$. On FFHQ, the coupling effect is $\delta_{\text{coupling}} = 0.037$--$0.038$ and the time-blindness effect is $\delta_{\text{time}} = 0.003$--$0.004$. In all cases the coupling variance dominates and the decomposition is approximately additive.

\subsection{Empirical Validation of Time Identifiability}\label{app:exp_identifiability}

We validate the theoretical predictions of Theorem~\ref{thm:t_identifiable} in four settings: (i)~a fully synthetic spiked covariance model where all parameters are controlled exactly, and (ii--iv)~real datasets (CIFAR-10 in pixel space, $d=3072$; FFHQ and CelebA-HQ in VAE latent space, $d=4096$), where the spiked covariance is fitted via PCA and used only for the theoretical prediction, not for data generation.

\subsubsection{Common Estimator}

In both settings, we observe interpolants of the form $z_i = (1 - t_i)\,\varepsilon_i + t_i\,x_i$, where $\varepsilon_i \sim \mathcal{N}(0, I_d)$ is the source noise and $x_i$ is drawn from a distribution with covariance $\Sigma = U_x\Lambda U_x^\top + \sigma^2 I_d$. The interpolation times are sampled as $t_i \sim \mathrm{Uniform}[0, 1]$ over the full path; we report the mean absolute error $\mathrm{MAE} := \frac{1}{N}\sum_i |\hat t_i - t_i|$, which absorbs both the in-regime ($t < t^*$) error and any branch-mismatch contribution near the right endpoint.

\paragraph{Estimator.}
Given $z_i$ alone, we estimate $t_i$ via the noise-subspace statistic.
Let $U_\perp \in \mathbb{R}^{d \times (d-k)}$ span the orthogonal complement of the signal subspace spanned by $U_x$, let $P_\perp := U_\perp U_\perp^\top$ denote the orthogonal projector onto the noise subspace, and define
\[
  \hat{\sigma}_\perp^2
  \;:=\;
  \frac{1}{d-k}\,\|P_\perp z_i\|^2.
\]
In the noise subspace, the interpolant has variance $\sigma_\perp^2(t_i) := (1 - t_i)^2 + t_i^2\,\sigma^2$, so $\hat{\sigma}_\perp^2$ concentrates around this deterministic function of $t_i$.

\subsubsection{Discarded Samples}\label{app:discarded_samples}

We compute $\hat{t}_i$ via the closed-form inversion~\eqref{eq:t_hat_inversion} derived in Appendix~\ref{app:closed_form_inversion}. The descending-branch minus sign applies in all four settings: Table~\ref{tab:dataset_sigma2} reports per-dataset $\sigma_\perp^2$ and $t^*$, with $t^* \ge 0.89$ throughout, so the descending branch covers nearly the full $[0,1]$ sampling range. The only finite-sample wrinkle is the discriminant, which can become negative. A short rearrangement of the discriminant in~\eqref{eq:t_hat_two_roots} gives
\[
  4 - 4(1+\sigma^2)(1 - \hat\sigma_\perp^2)
  \;=\;
  4\bigl[(1+\sigma^2)\,\hat\sigma_\perp^2 - \sigma^2\bigr],
\]
which is negative precisely when $\hat\sigma_\perp^2 < \sigma^2/(1+\sigma^2) = \sigma_\perp^2(t^*)$, i.e.\ when the empirical noise-subspace variance falls below the global minimum of the clock and therefore lies outside the range of $\sigma_\perp^2$. This can only occur as a finite-sample fluctuation: the chi-square statistic in~\eqref{eq:var_clock_statistic} occasionally dips below its mean far enough to push the estimate into the unattainable region. We discard such samples; in practice fewer than $0.1\%$ are affected.

\subsubsection{Theoretical Prediction}\label{app:theoretical_prediction}

The proof of Theorem~\ref{thm:t_identifiable} establishes the rate $\hat t - t = O_p(1/\sqrt{d-k})$ via Chebyshev's inequality, but for direct comparison with experiments we want a quantitative reference distribution rather than just a rate. We derive one below in three short moves: the exact variance of the noise-subspace statistic, its propagation through the inverse map, and the aggregation across sampled times.

\paragraph{Variance of the clock statistic.}
Conditioned on $t_i$, the components of $U_\perp^\top z_i$ are i.i.d.\ $\mathcal{N}(0, \sigma_\perp^2(t_i))$, so $(d-k)\,\hat\sigma_\perp^2 / \sigma_\perp^2(t_i) \sim \chi^2_{d-k}$. The variance of the chi-square distribution then gives the exact conditional variance
\begin{equation}\label{eq:var_clock_statistic}
    \mathrm{Var}\!\bigl(\hat\sigma_\perp^2 \mid t_i\bigr)
    \;=\; \sigma_\perp^4(t_i)\cdot\frac{2}{d-k}.
\end{equation}
For large $d-k$, the central limit theorem additionally implies $\hat\sigma_\perp^2$ is approximately Gaussian. Equation~\eqref{eq:var_clock_statistic} is the only stochastic input we need; everything that follows propagates it deterministically through the inverse map.

\paragraph{Delta-method propagation to time.}
Let $g := (\sigma_\perp^2)^{-1}$ denote the inverse map sending $\hat\sigma_\perp^2$ to $\hat t_i$. Its derivative at the true variance is
\[
    g'\!\bigl(\sigma_\perp^2(t_i)\bigr) \;=\; \frac{1}{\dot\sigma_\perp^2(t_i)},
    \qquad
    \dot\sigma_\perp^2(t) \;=\; -2(1-t) + 2t\sigma^2.
\]
A first-order Taylor expansion of $g$ around $\sigma_\perp^2(t_i)$ gives the standard delta-method approximation
\[
    \hat t_i - t_i \;\approx\; g'\!\bigl(\sigma_\perp^2(t_i)\bigr)\cdot\bigl(\hat\sigma_\perp^2 - \sigma_\perp^2(t_i)\bigr),
\]
and squaring then taking variances yields
\begin{equation}\label{eq:var_t_hat_delta}
    \mathrm{Var}\!\bigl(\hat t_i - t_i \mid t_i\bigr)
    \;\approx\;
    \underbrace{\frac{1}{\bigl[-2(1-t_i) + 2 t_i \sigma^2\bigr]^2}}_{\bigl[g'(\sigma_\perp^2(t_i))\bigr]^2}
    \cdot
    \underbrace{\sigma_\perp^4(t_i)\cdot\frac{2}{d-k}}_{\mathrm{Var}(\hat\sigma_\perp^2 \mid t_i)}.
\end{equation}
Because $g$ is a smooth map applied to a CLT-regime statistic, $\hat t_i - t_i$ inherits the Gaussian approximation of $\hat\sigma_\perp^2$ to leading order. This is the per-sample reference distribution we will compare against the empirical estimator.

\paragraph{Dataset-level prediction.}
To produce a single reference curve per dataset, we average~\eqref{eq:var_t_hat_delta} over the $N$ sampled times $\{t_i\}_{i=1}^N$, excluding samples within $0.05$ of the critical point $t_{\mathrm{crit}} = 1/(1+\sigma^2)$ (where $g'$ nearly vanishes and the first-order Taylor expansion breaks down). This yields the dataset-level standard deviation
\begin{equation}\label{eq:sigma_theory_avg}
    \bar\sigma_{\mathrm{theory}} \;=\; \sqrt{\frac{1}{N}\sum_{i=1}^N \mathrm{Var}\!\bigl(\hat t_i - t_i \mid t_i\bigr)},
\end{equation}
and we plot $\mathcal{N}(0, \bar\sigma_{\mathrm{theory}}^2)$ against the empirical histogram of $\hat t_i - t_i$. Equation~\eqref{eq:sigma_theory_avg} is the prediction tested in Figure~\ref{fig:t_estimation}.

\subsubsection{Setting 1: Synthetic Spiked Covariance}

We sample $N = 10{,}000$ independent triples $(\varepsilon_i, x_i, t_i)$ with
$\varepsilon_i \sim \mathcal{N}(0, I_d)$,
$x_i \sim \mathcal{N}(0, U_x\Lambda U_x^\top + \sigma^2 I_d)$, and
$t_i \sim \mathrm{Uniform}[0, 1]$.
Here $U_x \in \mathbb{R}^{d \times k}$ is a random orthonormal basis (obtained via QR decomposition of a Gaussian random matrix),
$\Lambda = \mathrm{diag}(\lambda_1, \dots, \lambda_k)$ with eigenvalues spaced linearly between $1.0$ and $10.0$, and $\sigma^2 = 0.1$.

To verify the $1/\sqrt{d-k}$ scaling, we fix $k = 10$ and sweep the ambient dimension across $d \in \{20, 50, 100, 200, 500, 1000\}$, giving noise subspace dimensions $d - k \in \{10, 40, 90, 190, 490, 990\}$. For each $d$, we compute $\hat{t}_i$ for all $N$ samples and record the empirical standard deviation of $\hat{t}_i - t_i$.

\paragraph{Results (Figure~\ref{fig:t_estimation}, panels a--b).}
Panel~(a) shows the empirical standard deviation versus $d - k$ alongside the theoretical prediction; the two curves are nearly indistinguishable, confirming that the delta method approximation is tight. Panel~(b) shows the full distribution of estimation errors for the setting $k = 10$, $d - k = 990$: the empirical KDE closely tracks the theoretical Gaussian $\mathcal{N}(0, \bar{\sigma}^2_{\mathrm{theory}})$, with the empirical distribution being slightly more concentrated than the theoretical prediction. This is expected, as the theoretical variance is a first-order upper approximation and the actual estimator benefits from higher-order concentration.

\subsubsection{Settings 2--4: Real Datasets with Fitted Covariance Prediction}\label{app:cifar10_pca}

The synthetic setting validates the theory in the exact model for which it was derived. We now ask a stronger question: does the Gaussian-derived theoretical prediction remain accurate when applied to \emph{real, non-Gaussian data}? To test this, we use actual images (or VAE latents) as the data distribution $x_i$ and only use the Gaussian approximation for the theoretical variance prediction, not for data generation. We evaluate on CIFAR-10 (pixel space, $d=3072$), FFHQ (VAE latents, $d=4096$), and CelebA-HQ (VAE latents, $d=4096$).

\paragraph{Common procedure.}
For each dataset we compute the sample mean $\mu$ and sample covariance $\hat{\Sigma} = \frac{1}{n-1}\sum_i (x_i - \mu)(x_i - \mu)^\top$ over the full set of $n$ available samples (CIFAR-10 in pixel space, CelebA-HQ and FFHQ in VAE-latent space), and form the eigendecomposition $\hat{\Sigma} = V\,\mathrm{diag}(\hat{\lambda}_1, \dots, \hat{\lambda}_d)\,V^\top$ with eigenvalues sorted in decreasing order. Given a signal rank $k$, we approximate the data covariance with a spiked model by setting the signal subspace $U_x = V_{:, 1:k}$, the spike eigenvalues $\Lambda = \mathrm{diag}(\hat{\lambda}_1 - \sigma^2, \dots, \hat{\lambda}_k - \sigma^2)$ (clipped to be non-negative), and the isotropic noise variance $\sigma^2 = \frac{1}{d-k}\sum_{j=k+1}^{d} \hat{\lambda}_j$. These parameters are used solely for the theoretical variance prediction; the data itself is not assumed to be Gaussian. We then sample $N = 20{,}000$ triples $(\varepsilon_i, x_i, t_i)$ with $\varepsilon_i \sim \mathcal{N}(0, I_d)$, $x_i$ drawn with replacement from the dataset (and centered by subtracting $\mu$), and $t_i \sim \mathrm{Uniform}[0, 1]$, form the interpolant $z_i = (1 - t_i)\,\varepsilon_i + t_i\,x_i^{\mathrm{centered}}$, and apply the noise-subspace estimator using the PCA-derived $U_\perp$.

The representative signal rank $k$ for the per-dataset error distribution plots (Figure~\ref{fig:t_estimation}) is chosen at the $95\%$ cumulative-explained-variance threshold, which falls in the regime where the spectrum transitions from rapid to gradual decay (Figure~\ref{fig:explained_variance}). The scaling sweep additionally varies $k \in \{10, 50, 100, 200, 500, 1000, 2000\}$ within each dataset to trace the full range of noise-subspace dimensions, with $\sigma^2$ recomputed from the residual spectrum at each $k$.

\paragraph{CIFAR-10.}
We load all $50{,}000$ training images and flatten each $32 \times 32 \times 3$ image to a $d = 3{,}072$-dimensional vector (pixel values in $[0, 1]$). The $95\%$ explained-variance threshold corresponds to $k = 217$, which is the rank used for the error-distribution panel; the mean absolute error is $0.0062$.

\paragraph{FFHQ.}
We use VAE latents at $d = 4{,}096$ (Stable Diffusion VAE, $4 \times 32 \times 32$); see Table~\ref{tab:hyperparams} for the encoding setup. The $95\%$ explained-variance threshold corresponds to $k = 2{,}581$; the mean absolute error is $0.0213$.

\paragraph{CelebA-HQ.}
We use VAE latents at $d = 4{,}096$ from the same Stable Diffusion VAE used for FFHQ; see Table~\ref{tab:hyperparams}. The $95\%$ explained-variance threshold corresponds to $k = 2{,}273$; the mean absolute error is $0.0175$.

\paragraph{Results (Figure~\ref{fig:t_estimation}).}
Figure~\ref{fig:t_estimation} shows the error distribution at the 95\% explained-variance rank for all four settings. Despite using real, non-Gaussian data, the empirical distributions closely match the Gaussian-derived theoretical prediction in every case, confirming that the bound is tight beyond the Gaussian model. The scaling sweep (not shown in the main figure) further confirms the $1/\sqrt{d-k}$ rate across all datasets. Notably, the VAE latent distributions (FFHQ, CelebA-HQ) have much flatter spectra than CIFAR-10 pixels: 95\% explained variance requires $k \approx 2{,}581$ and $k \approx 2{,}273$ respectively, compared to $k = 217$ for CIFAR-10 (Figure~\ref{fig:explained_variance}). This means the VAE spreads variance across more components, leaving a smaller effective noise subspace. Despite this, the estimator remains accurate in all cases, likely because the projection onto a high-dimensional noise subspace has a CLT-like averaging effect regardless of the marginal distribution of $x_i$.

\subsection{Time-Localized Estimation Error and the Variance-Clock Critical Point}\label{app:exp_clock_critical_point}

Theorem~\ref{thm:t_identifiable} requires staying away from the critical point $t^* = 1/(1 + \sigma^2)$ at which the scalar residual-variance clock $\sigma_\perp^2(t) = (1-t)^2 + t^2\sigma^2$ has zero derivative. This subsection probes how this caveat manifests on real datasets by (a) locating $t^*$ from the fitted residual variance per dataset, and (b) sweeping the noise-subspace estimator across $t \in [0, 1]$ and reporting binned mean absolute error.

\paragraph{Setup.}
For each of CIFAR-10, CelebA-HQ, and FFHQ we run PCA on the data, pick $k$ at the $95\%$ cumulative-explained-variance threshold, and fit $\hat\sigma_{\perp,\mathrm{data}}^2 = \mathrm{mean}(\hat\lambda_{j>k})$ from the trailing eigenvalues, giving the predicted clock and its critical point. We then sample $t \sim \mathrm{Uniform}[0, 1]$ (with extra mass at $t = 1$ to populate the right edge), draw a real datapoint $x$ and noise $\varepsilon \sim \mathcal{N}(0, I_d)$, form the interpolant $z = (1-t)\varepsilon + tx$, and run the noise-subspace estimator $\hat t(z)$ from~\eqref{eq:t_hat_def} via the closed-form descending-branch inverter of Appendix~\ref{app:closed_form_inversion} (the minus-sign root of the quadratic). On CelebA-HQ and FFHQ this includes $t > t^*$, where the always-descending-branch convention returns the wrong root by construction; the consequences are quantified in the second result below. We use $10^5$ samples per dataset and report the binned conditional mean absolute error $\mathbb{E}[|\hat t - t| \mid t]$.

\begin{table}[h]
  \centering
  \begin{tabular}{lrrrrr}
  \toprule
  Dataset & $d$ & $k$ & $d-k$ & $\sigma_\perp^2$ & $t^* = 1 / (1 + \sigma_\perp^2)$ \\
  \midrule
  Synthetic & --- & 10 & --- & 0.1000 & 0.9090 \\
  CIFAR-10 & 3072 & 99 & 2973 & 0.0064 & 0.9937 \\
  CelebA-HQ & 4096 & 1571 & 2525 & 0.1079 & 0.9026 \\
  FFHQ & 4096 & 1900 & 2196 & 0.1182 & 0.8943 \\
  \bottomrule
  \end{tabular}
  \vspace{0.75em}
  \caption{Per-dataset residual-subspace variance $\sigma_\perp^2$ and induced critical point $t^*$. $\sigma_\perp^2$ is the mean of the bottom $d-k$ PCA eigenvalues of the data covariance, with $k$ chosen at the $95\%$ cumulative-variance threshold.}
  \label{tab:dataset_sigma2}
\end{table}

\paragraph{Results.}
The experiment makes two predictions, one about the geometry of the variance clock and one about the empirical failure mode of the estimator.

\textbf{The clock's critical point is dataset-determined and pushed to the data endpoint by small residual variance.} \emph{Hypothesis:} since $t^* = 1/(1 + \hat\sigma_{\perp,\mathrm{data}}^2)$ is a monotone function of the fitted residual variance, datasets with small $\hat\sigma_{\perp,\mathrm{data}}^2$ should place $t^*$ near the data endpoint $t = 1$, while datasets with flatter residual spectra (larger $\hat\sigma_{\perp,\mathrm{data}}^2$) should move $t^*$ into the interior. \emph{Validation:} the fitted critical points (Table~\ref{tab:dataset_sigma2}, Figure~\ref{fig:clock_curves}) span CIFAR-10 $t^* \approx 0.994$ down to FFHQ $t^* \approx 0.894$: the pixel-space CIFAR-10 has small residual variance and $t^*$ at the boundary, while the VAE-latent datasets have flatter residual spectra and $t^*$ in the interior.

\textbf{For $t > t^*$ the descending-branch inverter returns the reflection $2t^* - t$, producing a quantitatively predictable rise in error.} \emph{Hypothesis:} the variance clock $\sigma_\perp^2(t) = (1+\sigma^2)(t - t^*)^2 + \sigma_\perp^2(t^*)$ is a parabola symmetric around $t^*$, so for any $\hat\sigma_\perp^2 \ge \sigma_\perp^2(t^*)$ the two preimages are $t_\pm = t^* \pm \delta$ with $\delta = \sqrt{(\hat\sigma_\perp^2 - \sigma_\perp^2(t^*))/(1+\sigma^2)}$. The minus-root inverter returns $t_-$, so when the true $t > t^*$ the estimator returns the reflection $2t^* - t$ rather than $t$, giving error
\[
  |\hat t - t| \;\approx\; 2(t - t^*) \quad \text{for } t > t^*.
\]
We therefore predict three things on the binned-MAE plot: (i) error stays small for $t < t^*$, (ii) error rises with slope $2$ starting near $t = t^*$, and (iii) the right-edge peak equals $2(1 - t^*)$ — predicted to be $\approx 0.012$ for CIFAR-10, $\approx 0.194$ for CelebA-HQ, and $\approx 0.212$ for FFHQ. \emph{Validation:} on CelebA-HQ and FFHQ (Figure~\ref{fig:estimator_error_t}) the rising-error region begins near the dataset-specific $t^*$ rather than at the right endpoint, with peak magnitudes matching the dataset-dependent $2(1-t^*)$ prediction. CIFAR-10's predicted peak ($0.012$) is small enough that the branch-mismatch effect is subdominant; any observed residual rise on CIFAR-10 instead reflects model-misspecification: as $t \to 1$ the marginal of $z_t$ approaches the data marginal, which is non-Gaussian and not perfectly described by the spiked-covariance fit used to compute $\sigma_\perp^2(t)$.

\begin{figure}[t]
\centering
\includegraphics[width=\linewidth]{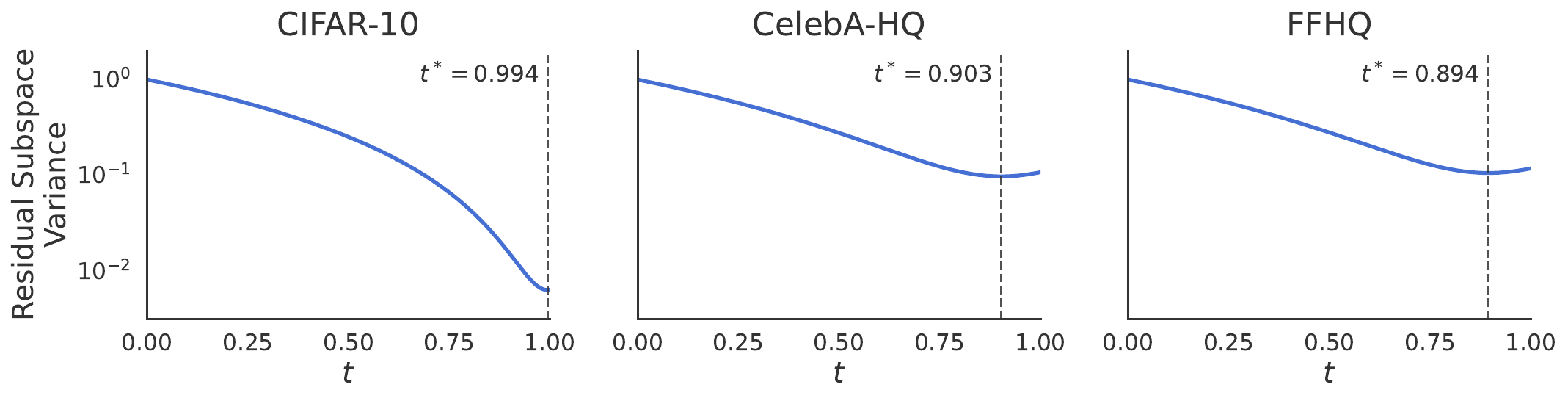}
\vspace{-1em}
\caption{\textbf{Dataset-specific residual-variance clocks.} For each dataset we estimate the residual-subspace data variance $\hat\sigma_{\perp,\mathrm{data}}^2$ after fixing the PCA signal dimension $k$, and plot $\sigma_\perp^2(t) = (1-t)^2 + t^2\,\hat\sigma_{\perp,\mathrm{data}}^2$. Dashed lines mark the predicted critical point $t^* = 1/(1 + \hat\sigma_{\perp,\mathrm{data}}^2)$, where the scalar variance statistic has zero derivative. Small residual variance (CIFAR-10) places $t^*$ near the data endpoint $t = 1$; flatter residual spectra (CelebA-HQ, FFHQ, VAE latents) move $t^*$ into the interior.}
\label{fig:clock_curves}
\end{figure}

\begin{figure}[t]
\centering
\includegraphics[width=\linewidth]{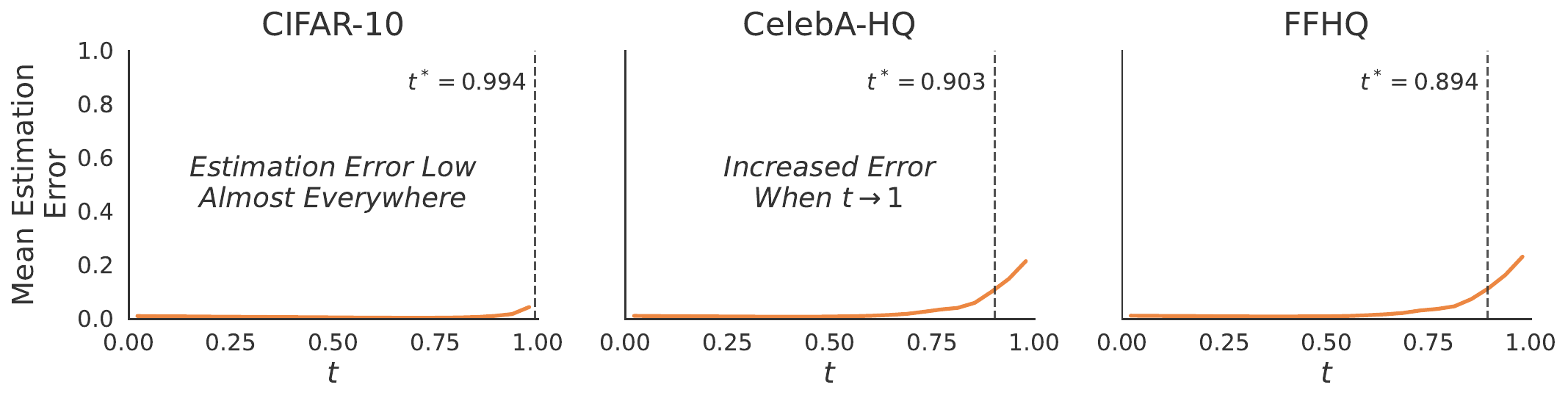}
\vspace{-1em}
\caption{\textbf{Estimator error rises for $t > t^*$, where the always-minus inverter returns the reflection $2t^* - t$ rather than $t$.} We bin samples by true interpolation time $t$ and report mean absolute error $\mathbb{E}[|\hat t - t| \mid t]$. The closed-form inverter (Appendix~\ref{app:closed_form_inversion}) always returns the descending-branch root, so for $t > t^*$ it returns $2t^* - t$ and incurs error $2(t - t^*)$. Dashed lines mark the dataset-specific $t^*$. On CelebA-HQ ($t^* \approx 0.903$) and FFHQ ($t^* \approx 0.894$) the empirical rise begins near $t^*$ with right-edge magnitude $\approx 2(1-t^*)$, matching the prediction. CIFAR-10 ($t^* \approx 0.994$) has almost no ascending-branch region, so its residual right-edge rise reflects other effects, most plausibly model misspecification of the spiked-covariance Gaussian fit as the marginal becomes data-like.}
\label{fig:estimator_error_t}
\end{figure}

\subsection{Closed-Form Variance Decomposition on the Spiked Gaussian}\label{app:exp_time_gap}

This experiment underlies Figure~\ref{fig:ambient_dim_analysis}. It validates Theorem~\ref{thm:time_independence_cost} (equivalently, Theorem~\ref{thm:relative_time_independence_cost}) in a setting where the ambient and signal dimensions can be controlled directly, so the predicted separation of scales between the coupling variance (Term~II) and the time-blindness gap (Term~III) is observable without confounds from model capacity or data distribution.

\paragraph{Setup.}
We use the spiked Gaussian model throughout: $\varepsilon \sim \mathcal{N}(0, I_d)$, $x \sim \mathcal{N}(0, \Sigma)$ with $\Sigma = \mathrm{diag}(\lambda, \dots, \lambda, \sigma^2, \dots, \sigma^2)$ (the first $k$ coordinates are signal with variance $\lambda$, the remaining $d - k$ are isotropic noise with variance $\sigma^2$), $z = (1-t)\varepsilon + tx$, and $u = x - \varepsilon$. Default parameters are $\lambda = 10$, $\sigma^2 = 0.01$, with $t \sim \mathrm{Uniform}[\tau, 1-\tau]$, $\tau = 0.15$. Because $\Sigma$ is diagonal and $\varepsilon, x$ are independent, the pairs $(z_i, u_i)$ are jointly Gaussian and independent across coordinates, so all quantities of interest reduce to sums of per-coordinate integrals and admit either closed forms or low-variance numerical estimators.

\paragraph{Term~I: learnable error.}
The learnable error is the variance of $u = x - \varepsilon$ with no conditioning, which has the closed form
\[
    \mathrm{tr}(\mathrm{Cov}(u)) \;=\; \mathrm{tr}(\Sigma + I_d) \;=\; k(\lambda + 1) + (d - k)(\sigma^2 + 1).
\]
This is the only one of the three terms that admits a fully closed-form integral, since it does not depend on $t$.

\paragraph{Term~II: coupling variance $\mathcal{C}_\pi$.}
Per coordinate, joint Gaussianity of $(z_i, u_i)$ gives the standard regression-variance closed form
\[
    \mathrm{Var}(u_i \mid z_i, t) \;=\; (S_i + 1) - \frac{(tS_i - (1-t))^2}{v_i(t)},
    \qquad
    v_i(t) \;:=\; (1-t)^2 + t^2 S_i,
\]
with $S_i \in \{\lambda, \sigma^2\}$ depending on whether $i$ is a signal or noise coordinate. This expression does not depend on $z_i$ (Gaussian--Gaussian conditioning), so summing over coordinates produces a deterministic function of $t$. We integrate it against $t \sim \mathrm{Unif}[\tau, 1-\tau]$ by the trapezoidal rule on a $2000$-point grid to obtain $\mathcal{C}_\pi$.

\paragraph{Total time-blind variance $\mathbb{E}_z[\mathrm{tr}(\mathrm{Var}(u \mid z))]$.}
This is the irreducible error when $t$ is not observed. Unlike $\mathcal{C}_\pi$ it does not reduce to a closed-form integral, because $\mathrm{Var}(u \mid z)$ depends on the full posterior $p(t \mid z)$. We estimate it with a hybrid Monte-Carlo / grid scheme.

We draw $N = 2 \times 10^5$ outer samples $(t, \varepsilon, x)$ with $t \sim \mathrm{Unif}[\tau, 1-\tau]$, $\varepsilon \sim \mathcal{N}(0, I_d)$, and $x \sim \mathcal{N}(0, \Sigma)$, and form $z = (1-t)\varepsilon + tx$. For each such $z$, we evaluate the posterior on a $2000$-point grid $\{t'\}$ in $[\tau, 1-\tau]$ via
\[
    p(t' \mid z) \;\propto\; \prod_i \mathcal{N}\!\bigl(z_i;\, 0,\, v_i(t')\bigr).
\]

From this posterior, we compute the conditional mean of each coordinate of $u$ as
\[
    \mathbb{E}[u_i \mid z] \;=\; z_i \sum_{t'} \alpha_i(t')\, p(t' \mid z),
    \qquad
    \alpha_i(t') \;:=\; \frac{t' S_i - (1-t')}{v_i(t')},
\]
and the conditional second moment as
\[
    \mathbb{E}[u_i^2 \mid z] \;=\; \sum_{t'} \bigl[\mathrm{Var}(u_i \mid z, t') + (\alpha_i(t')\, z_i)^2\bigr]\, p(t' \mid z).
\]
The per-coordinate conditional variance follows by subtraction:
\[
    \mathrm{Var}(u_i \mid z) \;=\; \mathbb{E}[u_i^2 \mid z] - \bigl(\mathbb{E}[u_i \mid z]\bigr)^2.
\]

Summing across coordinates and averaging across the $N$ samples of $z$ gives the estimator of $\mathbb{E}_z[\mathrm{tr}(\mathrm{Var}(u \mid z))]$. All arithmetic is performed in double precision to avoid numerical cancellation when Term~III is orders of magnitude smaller than Term~II.

\paragraph{Term~III: time-blindness gap $\Delta$.}
The time-blindness gap is recovered by the law of total variance as the residual between the total time-blind variance and the coupling variance: $\Delta = \mathbb{E}_z[\mathrm{tr}(\mathrm{Var}(u \mid z))] - \mathcal{C}_\pi$. This decomposition lets us extract $\Delta$ without needing a separate Monte-Carlo estimator for it.

\paragraph{Sweep.}
We sweep both axes of the $(d, k)$ grid simultaneously: $k \in \{1, 2, 4, \dots, 1024\}$ and $d \in \{2, 4, 8, \dots, 1024\}$ (log-spaced powers of two), restricted to $k < d$ since the construction requires a non-empty noise subspace. At each $(d, k)$ we recompute all three terms from scratch. From this $(d, k)$ table we extract three slices for Figure~\ref{fig:ambient_dim_analysis}: the column at fixed $d = 1024$ sweeping $k$ (left panel), the entire grid as a heatmap of Term~III (center panel), and the row at fixed $k = 64$ sweeping $d$ (right panel).

\paragraph{Results (Figure~\ref{fig:ambient_dim_analysis}).}
The right panel reproduces the asymptotic gap-collapse: at $k = 64$ fixed, Term~III decays from $\sim\!2$ at the smallest admissible $d$ to below $\sim\!0.5$ by $d = 1024$, while Terms~I and II stay bounded over the same sweep. The left panel shows the complementary regime: at $d = 1024$ fixed, all three terms grow with $k$, but Term~III grows fastest because the noise subspace $d - k$ is shrinking, sharpening the prediction that the variance-based identifiability mechanism degrades as $k \to d$. The center heatmap shows both regimes simultaneously: $\Delta$ is small along the upper-right edge ($k \ll d$) and large along the diagonal ($k \approx d$), tracing the same geometry across the $(d, k)$ plane. Together, these slices give a quantitative signature of Theorem~\ref{thm:time_independence_cost}.

\end{document}